\def\x{{\mathbf x}}
\def\z{{\mathbf z}}
\def\c{{\mathbf c}}
\def\v{{\mathbf v}}
\def\u{{\mathbf u}}
\def\X{{\mathbf X}}
\def\W{{\mathbf W}}
\def\Q{{\mathbf Q}}
\def\A{{\mathbf A}}
\def\L{{\mathbf L}}
\def\D{{\mathbf D}}
\def\S{{\mathbf S}}
\def\I{{\mathbf I}}
\newtheorem{theorem}{Theorem}
\newcommand{\maxcut}{\texttt{\small{MAXCUT}}}
\renewcommand{\t}[1]{\tiny{#1}}
\newcommand{\argmin}{\operatornamewithlimits{arg\ min}}
\title{Hierarchical Representation Learning in Graph Neural Networks with Node Decimation Pooling}
\author{Filippo Maria Bianchi$^{*}$, %
        Daniele Grattarola~\IEEEmembership{Student,~IEEE},
        Lorenzo Livi~\IEEEmembership{Member,~IEEE},
        Cesare Alippi~\IEEEmembership{Fellow,~IEEE}
\thanks{*filippo.m.bianchi@uit.no}
\thanks{F. M. Bianchi is with the Dept. of Mathematics and Statistics, UiT the Arctic University of Norway and with NORCE, Norwegian Research Centre}
\thanks{D. Grattarola is with the Faculty of Informatics, Universit\`a della Svizzera italiana, Switzerland}
\thanks{L. Livi is with Dept.s. of Computer Science and Mathematics, University of Manitoba, Canada, and Dept. of Computer Science, University of Exeter, United Kingdom}
\thanks{C. Alippi is with Faculty of Informatics, Universit\`a della Svizzera italiana, Switzerland, and Dept. of Electronics, Information, and Bioengineering, Politecnico di Milano, Italy}
}
\begin{document}

\maketitle

\begin{abstract}
In graph neural networks (GNNs), pooling operators compute local summaries of input graphs to capture their global properties, and they are fundamental for building deep GNNs that learn hierarchical representations. 
In this work, we propose the Node Decimation Pooling (NDP), a pooling operator for GNNs that generates coarser graphs while preserving the overall graph topology. 
During training, the GNN learns new node representations and fits them to a pyramid of coarsened graphs, which is computed offline in a pre-processing stage.

NDP consists of three steps. 
First, a node decimation procedure selects the nodes belonging to one side of the partition identified by a spectral algorithm that approximates the \maxcut{} solution.
Afterwards, the selected nodes are connected with Kron reduction to form the coarsened graph.
Finally, since the resulting graph is very dense, we apply a sparsification procedure that prunes the adjacency matrix of the coarsened graph to reduce the computational cost in the GNN.
Notably, we show that it is possible to remove many edges without significantly altering the graph structure.

Experimental results show that NDP is more efficient compared to state-of-the-art graph pooling operators while reaching, at the same time, competitive performance on a significant variety of graph classification tasks. 
\end{abstract}
\begin{IEEEkeywords}
Graph neural networks; Graph pooling; Maxcut optimization; Kron reduction; Graph classification.
\end{IEEEkeywords}

\section{Introduction}

Generating hierarchical representations across the layers of a neural network is key to deep learning methods. This hierarchical representation is usually achieved through pooling operations, which progressively reduce the dimensionality of the inputs encouraging the network to learn high-level data descriptors.
Graph Neural Networks (GNNs) are machine learning models that learn abstract representations of graph-structured data to solve a large variety of inference tasks~\cite{7974879, zhang2019depth, bai2019learning, bai2019deep, zhang2018end}.
Differently from neural networks that process vectors, images, or sequences, the graphs processed by GNNs have an arbitrary topology. 
As a consequence, standard pooling operations that leverage on the regular structure of the data and physical locality principles cannot be immediately applied to GNNs. 

Graph pooling aggregates vertex features while reducing, at the same time, the underlying structure in order to maintain a meaningful connectivity in the coarsened graph. 
By alternating graph pooling and message-passing (MP) operations~\cite{gilmer2017neural}, a GNN can gradually distill global properties from the graph, which are then used in tasks such as graph classification.

In this paper, we propose \textit{Node Decimation Pooling} (NDP), a pooling operator for GNNs. 
NDP is based on node decimation, a procedure developed in the field of graph signal processing for the design of multi-scale graph filters~\cite{tremblay2018design}.
In particular, we build upon the \textit{multi-resolution} framework~\cite{shuman2016multiscale} that consists of removing some nodes from a graph and then building a coarsened graph from the remaining ones.
The NDP procedure that we propose pre-computes off-line (\textit{i.e.}, before training) a \textit{pyramid} of coarsened graphs, which are then used as support for the node representations computed at different levels of the GNN architecture.

The contributions of our work are the following.

\begin{enumerate}
    \item We introduce the NDP operator that allows to implement deep GNNs that have a low complexity (in terms of execution time and memory requirements) and achieve high accuracy on several downstream tasks.
    \item We propose a simple and efficient spectral algorithm that partitions the graph nodes in two sets by maximizing a \maxcut{} objective. Such a partition is exploited to select the nodes to be discarded when coarsening the graph.
    \item We propose a graph sparsification procedure that reduces the computational cost of MP operations applied after pooling and has a small impact on the representations learned by the GNN. 
    In particular, we show both analytically and empirically that many edges can be removed without significantly altering the graph structure.
\end{enumerate}

When compared to other methods for graph pooling, NDP performs significantly better than other techniques that pre-compute the topology of the coarsened graphs, while it achieves a comparable performance with respect to state-of-the-art \textit{feature-based} pooling methods.
The latter, learn both the topology and the features of the coarsened graphs end-to-end via gradient descent, at the cost of a larger model complexity and higher training time.
The efficiency of NDP brings a significant advantage when GNNs are deployed in real-world scenarios subject to computational constraints, like in embedded devices and sensor networks.

The paper is organized as follows: in Sect.~\ref{sec:preliminaries}, we formalize the problem and introduce the nomenclature; in Sect.~\ref{sec:method}, we present the proposed method; Sect.~\ref{sec:analysis} provides formal analyses and implementation details; related works are discussed in Sect.~\ref{sec:related_work}, and Sect.~\ref{sec:experiments} reports experimental results.
Further results and analyses are deferred to the supplementary material.

\section{Preliminaries}
\label{sec:preliminaries}
Let $G = \{\mathcal{V}, \mathcal{E} \}$ be a graph with node set $\mathcal{V}$, $|\mathcal{V}| = N$, and edge set $\mathcal{E}$ described by a symmetric adjacency matrix $\A \in \mathbb{R}^{N \times N}$. 
Define as \textit{graph signal} $\X \in \mathbb{R}^{N \times F}$ the matrix containing the features of the nodes in the graph (the $i$-th row of $\X$ corresponds with the features $\x_i \in \mathbb{R}^F$ of the $i$-th node).
For simplicity, we will only consider undirected graphs without edge annotations. 

Let $\L = \D - \A$ be the Laplacian of the graph, where $\D$ is a diagonal degree matrix s.t.\ $d_{ii}$ is the degree of node $i$.
We also define the symmetric Laplacian as $\L_s = \I - \D^{-1/2} \A \D^{-1/2}$.
The Laplacian characterizes the dynamics of a diffusion process on the graph and plays a fundamental role in the proposed graph reduction procedure. 
We note that in the presence of directed edges it is still possible to obtain a symmetric and positive-semidefinite Laplacian~\cite{chung2005laplacians, sandryhaila2013discrete} for which the derivations presented in this paper hold.

We consider a GNN composed of a stack of MP layers, each one followed by a graph pooling operation.
The $(l)$-th pooling operation reduces $N_{l}$ nodes to $N_{l+i} < N_l$, producing a pooled version of the node features $\X^{(l+1)} \in \mathbb{R}^{N_{l+1} \times F_{l+1}}$ and adjacency matrix $\A^{(l+1)} \in \mathbb{R}^{N_{l+1} \times N_{l+1}}$ (see Fig.~\ref{fig:lap_pyramid}).
To implement the MP layer, we consider a simple formulation that operates on the first-order neighbourhood of each node and accounts for the original node features through a separate set of weights acting as a layer-wise skip connection. 
The computation carried out by the $(j)$-th MP layer is given by
\begin{equation}
    \label{eq:mp}
    \begin{aligned}
    \X_{j+1} & = \textrm{MP}(\X_j, \A; \boldsymbol{\Theta}_{\textrm{MP}}) \\
             & = \text{ReLU}(\D^{-\frac{1}{2}} \A \D^{-\frac{1}{2}} \X_j \W + \X_j \mathbf{V}),
    \end{aligned}
\end{equation}
where $\boldsymbol{\Theta}_{\textrm{MP}} = \{\W \in \mathbb{R}^{F_j \times F_{j+1}}, \mathbf{V} \in \mathbb{R}^{F_j \times F_{j+1}} \}$ are the trainable weights relative to the mixing and skip component of the layer, respectively. 
Several other types of MP (\textit{e.g.}, those proposed in \cite{defferrard2016convolutional, kipf2016semi, velickovic2017graph, bianchi2019graph, xu2018powerful, hamilton2017inductive}) can seamlessly be used in conjunction with NDP pooling.
In the presence of annotated edges, the MP operation can be extended by following \cite{simonovsky2017dynamic} and \cite{schlichtkrull2018modeling}.

\section{Graph coarsening with Node Decimation Pooling}
\label{sec:method}

In this section, we describe the proposed NDP operation that consists of the three steps depicted in Fig.~\ref{fig:schema}: 
(a) decimate the nodes by dropping one of the two sides of the \maxcut{} partition; 
(b) connect the remaining nodes with a \textit{link construction} procedure; 
(c) sparsify the adjacency matrix resulting from the coarsened Laplacian, so that only \textit{strong} connections are kept, \textit{i.e.}, those edges whose weight is associated to an entry of the adjacency matrix above a given threshold $\epsilon$.
\begin{figure*}[!ht]
    \centering
    \includegraphics[keepaspectratio,width=\textwidth]{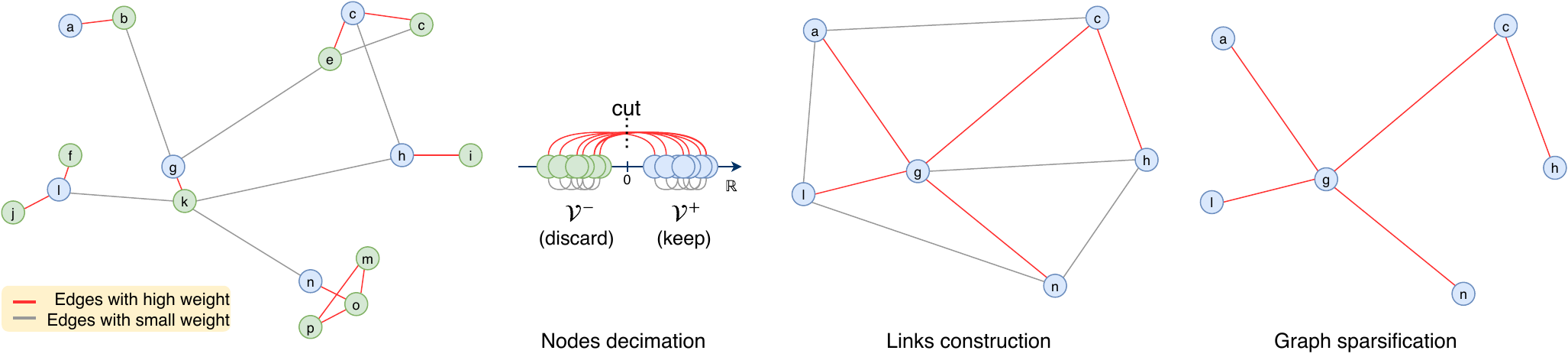}    
    \caption{Depiction of the proposed graph coarsening procedure. 
    First, the nodes are partitioned in two sets according to a \maxcut{} objective and then are decimated by dropping one of the two sets ($\mathcal{V}^{-}$). Then, a coarsened Laplacian is built by connecting the remaining nodes with a graph reduction procedure. 
    Finally, the edges with low weights in the new adjacency matrix obtained from the coarsened Laplacian are dropped to make the resulting graph sparser.}
    \label{fig:schema}
\end{figure*}
The proposed method is completely unsupervised and the coarsened graphs are pre-computed before training the GNN.

\subsection{Node decimation with MAXCUT spectral partitioning}
\label{sec:maxcut}

Similarly to pooling operations in Convolutional Neural Networks (CNNs) that compute local summaries of neighboring pixels, we propose a pooling procedure that provides an effective coverage of the whole graph and reduces the number of nodes approximately by a factor of 2.
This can be achieved by partitioning nodes in two sets, so that nodes in one set are strongly connected to the complement nodes of the partition, and then dropping one of the two sets.
The rationale is that strongly connected nodes exchange a lot of information after a MP operation and, as a result, they are highly dependent and their features become similar.
Therefore, one set alone can represent the whole graph sufficiently well.
This is similar to pooling in CNNs, where the maximum or the average is extracted from a small patch of neighboring pixels, which are assumed to be highly correlated and contain similar information.
In the following, we formalize the problem of finding the optimal subset of vertices that can be used to represent the whole graph.

The partition of the vertices (a cut) that maximizes the volume of edges whose endpoints are on opposite sides of the partition is the solution of the \maxcut{} problem~\cite{palagi2012computational}.
The \maxcut{} objective is expressed by the integer quadratic problem
\begin{equation}
    \label{eq:maxcut}
    \max \limits_{\z} \sum \limits_{i,j \in  \mathcal{V}} a_{ij}(1- z_i z_j) \;\; \text{s.t.} \;\; z_i \in \{ -1, 1 \},
\end{equation}
where $\z$ is the vector containing the optimization variables $z_i$ for $i=1, \dots, N$ indicating to which side of the bi-partition the node $i$ is assigned to; $a_{ij}$ is the entry at row $i$ and column $j$ of $\A$.
Problem \eqref{eq:maxcut} is NP-hard and heuristics must be considered to solve it.
The heuristic that gives the best-known \maxcut{} approximation in polynomial time is the Goemans-Williamson algorithm, which is based on the Semi-Definite Programming (SDP) relaxation~\cite{goemans1995improved}.
Solving SDP is cumbersome and requires specific optimization programs that scale poorly on large graphs.
Therefore, we hereby propose a simple algorithm based on the Laplacian spectrum.

First, we rewrite the objective function in \eqref{eq:maxcut} as a quadratic form of the graph Laplacian:
\begin{equation*}
    \begin{aligned}
    & \sum \limits_{i,j} a_{ij}(1- z_i z_j) = \sum \limits_{i,j} a_{ij} \left(\frac{z_i^2 + z_j^2}{2} - z_i z_j \right) \\
    & = \frac{1}{2} \sum \limits_i \Bigg[ \sum \limits_j a_{ij} \Bigg]z_i^2 + \frac{1}{2} \sum \limits_j \Bigg[ \sum \limits_i a_{ij} \Bigg] z_j^2 - \sum \limits_{i,j} a_{ij} z_i z_j \\
    & = \frac{1}{2} \sum \limits_i d_{ii} z_i^2 + \frac{1}{2} \sum \limits_j d_{jj} z_j^2 - \z^T \A \z \\
    & = \z^T \D \z - \z^T \A \z = \z^T \L \z.
    \end{aligned}
\end{equation*}

Then, we consider a continuous relaxation of the integer problem \eqref{eq:maxcut} by letting the discrete partition vector $\z$ assume continuous values, contained in a vector $\c$:
\begin{equation}
    \label{eq:relaxed}
    \max \limits_{\c} \; \c^T \L \c, \;\; \text{s.t.} \;\; \c \in \mathbb{R}^N \;\; \text{and} \;\; \| \c \|^2 = 1.
\end{equation}

Eq.~\ref{eq:relaxed} can be solved by considering the Lagrangian $\c^T \L \c + \lambda \c^T\c$ to find the maximum of $\c^T \L \c$ under constraint $\| \c \|^2 = 1$. 
By setting the gradient of the Lagrangian to zero, we recover the eigenvalue equation $\L \c + \lambda \c = 0$.
All the eigenvalues of $\L$ are non-negative and, by restricting the space of feasible solutions to vectors of unitary norm, the trivial solution $\c^* = \infty$ is excluded.
In particular, if $\| \c \|^2 = 1$, $\c^T \L \c$ is a Rayleigh quotient and reaches its maximum $\lambda_\text{max}$ (the largest eigenvalue of $\L$) when $\c^*$ corresponds to $\v_\text{max}$, the eigenvector associated with $\lambda_\text{max}$.

Since the components of $\v_\text{max}$ are real, we apply a rounding procedure to find a discrete solution.
Specifically, we are looking for a partition $\z^* \in \mathcal{Z}$, where $\mathcal{Z} = \{\z: \z \in \{-1,1\}^N\}$ is the set of all feasible cuts, so that $\z^*$ is the closest (in a least-square sense) to $\c^*$.
This amounts to solving the problem
\begin{equation}
    \z^* = \argmin \{ \| \c^* - \z \|_2: \z \in \mathcal{Z} \},
\end{equation}
with the optimum given by
\begin{equation}
\label{eq:partition_vec}
    z_i^* = 
    \begin{cases}
    1, & c_i^* \geq 0, \\
    -1, & c_i^* < 0.
    \end{cases}
\end{equation}


By means of the rounding procedure in \eqref{eq:partition_vec}, the nodes in $\mathcal{V}$ are partitioned in two sets, $\mathcal{V}^{+}$ and $\mathcal{V}^{-} = \mathcal{V} \setminus \mathcal{V}^{+}$, such that
\begin{equation}
\label{eq:partition}
    \mathcal{V}^{+} = \{i \in \mathcal{V}: \v_\text{max}[i] \geq 0 \}.
\end{equation}
In the NDP algorithm we always drop the nodes in $\mathcal{V}^{-}$, \textit{i.e.}, the nodes associated with a negative value in $\v_\text{max}$. 
However, it would be equivalent to drop the nodes in $\mathcal{V}^{+}$.
The node decimation procedure offers two important advantages: 
i) it removes approximately half of the nodes when applied, \textit{i.e.}, $|\mathcal{V}^{+}| \approx |\mathcal{V}^{-}|$; 
ii) the eigenvector $\v_\text{max}$ can be quickly computed with the power method~\cite{bianchi2017agent}.

There exists an analogy between the proposed spectral algorithm for partitioning the graph nodes and spectral clustering~\cite{von2007tutorial}.
However, spectral clustering solves a \texttt{\small{minCUT}} problem~\cite{shi2000normalized, icml2020_1614}, which is somehow orthogonal to the \maxcut{} problem considered here.
In particular, spectral clustering identifies $K \geq 2$ clusters of densely connected nodes by cutting the smallest volume of edges in the graph, while our algorithm cuts the largest volume of edges yielding two sets of nodes, $\mathcal{V}^{+}$ and $\mathcal{V}^{-}$, that cover the original graph in a similar way.
Moreover, spectral clustering partitions the nodes in $K$ clusters based on the values of the eigenvectors associated with the $M \geq 1$ smallest eigenvalues, while our algorithm partitions the nodes in two sets based only on the last eigenvector $\v_\text{max}$.

\subsection{Links construction on the coarsened graph}
\label{sec:links_construction}
After dropping nodes in $\mathcal{V}^{-}$ and all their incident edges, the resulting graph is likely to be disconnected.
Therefore, we use a link construction procedure to obtain a connected graph supported by the nodes in $\mathcal{V}^{+}$.
Specifically, we adopt the Kron reduction~\cite{kron_red} to generate a new Laplacian $\L^{(1)}$, which is computed as the Schur complement of $\L$ with respect to the nodes in $\mathcal{V}^{-}$.
In detail, the reduced Laplacian $\L^{(1)}$ is
\begin{equation}
\label{eq:kron}
\L^{(1)} = \L \setminus \L_{\mathcal{V}^{-}, \mathcal{V}^{-}} = \L_{\mathcal{V}^{+}, \mathcal{V}^{+}} - \L_{\mathcal{V}^{+}, \mathcal{V}^{-}} \L_{\mathcal{V}^{-}, \mathcal{V}^{-}}^{-1} \L_{\mathcal{V}^{-}, \mathcal{V}^{+}}
\end{equation}
where $\L_{\mathcal{V}^{+}, \mathcal{V}^{-}}$ identifies a sub-matrix of $\L$ with rows (columns) corresponding to the nodes in $\mathcal{V}^{+}$ ($\mathcal{V}^{-}$).

It is possible to show that $\L_{\mathcal{V}^{-}, \mathcal{V}^{-}}$ is always invertible if the associated adjacency matrix $\A$ is \textit{irreducible}.
We note that $\A$ is irreducible when the graph is not disconnected (\textit{i.e.}, has a single component), a property that holds when the algebraic multiplicity of the eigenvalue $\lambda_\text{min}=0$ is 1.

Let us consider the case where $\A$ has no self loops.
The Laplacian is by definition a weakly diagonally dominant matrix, since $\L_{ii} = \sum_{j=1, j \neq i}^N |\L_{ij}|$ for all $i \in \mathcal{V}$. 
If $\A$ is irreducible, then $\L$ is also irreducible. 
This implies that the strict inequality $\L_{ii} > \sum_{j=1, j \neq i}^n |\L_{ij}|$ holds for at least one vertex $i \in \mathcal{V}^{-}$. 
It follows that the Kron-reduced Laplacian $\L_{\mathcal{V}^{-},\mathcal{V}^{-}}$ is also irreducible, diagonally dominant, and has at least one row with a strictly positive row sum. Hence, $\L_{\mathcal{V}^{-},\mathcal{V}^{-}}$ is invertible, as proven by~\cite{horn2012matrix} in Corollary 6.2.27.
When $\A$ contains self-loops, the existence of the inverse of $\L_{\mathcal{V}^{-}, \mathcal{V}^{-}}$ is still guaranteed through a small work-around, which is discussed in App.~\ref{sec:kron_loops}.
Finally, if the graph is disconnected then $\A$ is reducible (\textit{i.e.}, it can be expressed in an upper-triangular block form by simultaneous row/column permutations); in this case, the Kron reduction can be computed by means of the generalized inverse $\L_{\mathcal{V}^{-}, \mathcal{V}^{-}}^\dagger$ and the solution corresponds to a generalized Schur complement of $\L$.

$\L^{(1)}$ in \eqref{eq:kron} is a well-defined Laplacian where two nodes are connected if and only if there is a path between them in $\L$ (connectivity preservation property). 
Also, $\L^{(1)}$ does not introduce self-loops and guarantees the preservation of resistance distance~\cite{shuman2016multiscale}.
Finally, Kron reduction guarantees spectral interlacing between the original Laplacian $\mathbf{L} \in \mathbb{R}^{N \times N}$ and the new coarsened one $\L^{(1)} \in \mathbb{R}^{N_1 \times N_1}$, with $N_1 \leq N$.
Specifically, we have $\lambda_i \geq \lambda^{(1)}_i \geq \lambda_{N-N_1+i}$, $\forall i=1, \dots, N_1$, where $\lambda_i$ and $\lambda^{(1)}_i$ are the eigenvalues of $\L$ and $\L^{(1)}$, respectively.

The adjacency matrix of the new coarsened graph is recovered from the coarsened Laplacian: 
\begin{equation}
    \A^{(1)} = \left( -\L^{(1)} + \text{diag}(\{ \sum_{j=1, j\neq i}^{N_1} \L^{(1)}_{ij}\}_{i=1}^{N_1}) \right).    
\end{equation}

We note that $\A^{(1)}$ does not contain self-loops;
we refer to App.~\ref{sec:kron_loops} for a discussion on how to handle the case with self-loops in the original adjacency matrix.

A pyramid of coarsened Laplacians is generated by recursively applying node decimation followed by Kron reduction. 
At the end of the procedure, the adjacency matrices $\mathcal{A} = \{\A^{(0)}, \A^{(1)}, \dots, \A^{(l)}, \dots\}$ of the coarsened graphs are derived from the associated coarsened Laplacians.
Note that we interchangeably refer with $\A^{(0)}$ or $\A$ to the original adjacency matrix.
The adjacency matrices in $\mathcal{A}$ are used to implement hierarchical pooling in deep GNN architectures.

\subsection{Graph sparsification}
\label{sec:sparsify}

To implement multiple pooling operations the graph must be coarsened several times.
Due to the connectivity preservation property, by repeatedly applying Kron reduction the graph eventually becomes fully-connected.
This implies a high computational burden in deeper layers of the network, since the complexity of MP operations scales with the number of edges.

To address this issue, it is possible to apply the spectral sparsification algorithm proposed in~\cite{batson2013spectral} to obtain a sparser graph. However, we found that this procedure leads to numerical instability and poor convergence during the learning phase.
Therefore, to limit the number of edges with non-zero weights we propose a sparsification procedure that removes from the adjacency matrix of the coarsened graph the edges whose weight is below a small user-defined threshold $\epsilon$:

\begin{equation}
    \bar \A^{(i)} = 
    \begin{cases}
    \bar a^{(l)}_{ij} = 0,              & \;\; \text{if} \;\; |a^{(l)}_{ij}| \leq \epsilon \\
    \bar a^{(l)}_{ij} = a^{(l)}_{ij},   &\;\; \text{otherwise}.\\
    \end{cases}
\end{equation}

\subsection{Pooling with decimation matrices.}
\label{sec:pooling_matrices}

To pool the node features with a differentiable operation that can be applied while training the GNN, we multiply the graph signal $\X^{(l)}$ with a \textit{decimation matrix} $\mathbf{S}^{(l)} \in \mathbb{N}^{N_{l+1} \times N_l}$.
$\mathbf{S}^{(l)}$ is obtained by selecting from the identity matrix $\I_{N_l} \in \mathbb{N}^{N_l \times N_l}$ the rows corresponding to the vertices in $\mathcal{V}^{+}$:
\begin{equation}
    \label{eq:dec_matr}
    \X^{(l+1)} = \mathbf{S}^{(l)} \X^{(l)} = \left[ \I_{N_l} \right]_{\mathcal{V}^{+},:}\X^{(l)}.
\end{equation}

As discussed in Sec.~\ref{sec:maxcut}, NDP approximately halves the the nodes of the current graph at each pooling stage.
This is a consequence of the \maxcut{} objective that splits the nodes in two sets so that the volume of edges crossing the partition, \textit{i.e.}, the edges to be cut, is maximized.
Intuitively, if one of the two sets is much smaller than the other, more edges are cut by moving some nodes to the smaller set.
For this reason, the application of a single decimation matrix $\mathbf{S}^{(l)}$ shares similarities with a classic pooling with stride 2 in CNNs.

It follows that a down-sampling ratio of $\approx2^k$ can be obtained in NDP by applying $k$ decimation matrices in cascade.
This enables moving from level $l$ to level $l+k$ ($k > 1$) in the pyramid of coarsened graphs.
Fig.~\ref{fig:lap_pyramid} shows an example of pooling with downsampling ratio $\approx8$, where the GNN performs message-passing on $\A^{(3)}$ right after $\A^{(0)}$.
\begin{figure}[!ht]
    \centering
    \includegraphics[keepaspectratio,width=.75\columnwidth]{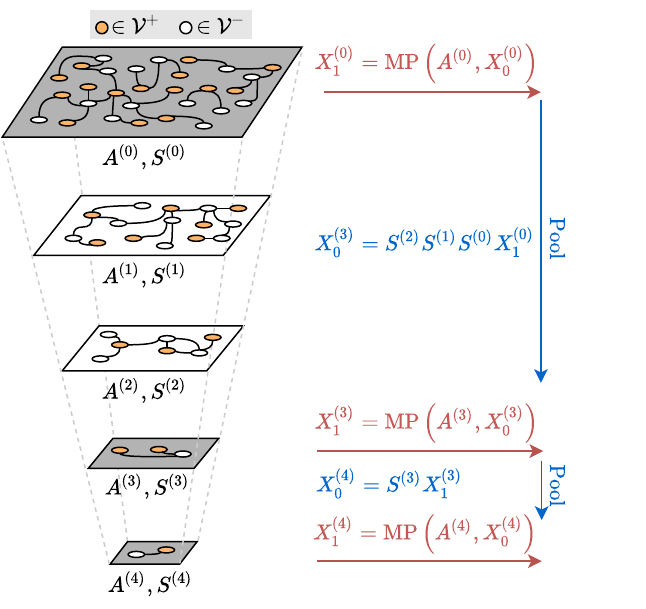}    
    \caption{This example shows how it is possible to skip some MP operations on intermediate levels of the pyramid of coarsened graphs. 
    Such a procedure shares analogies with pooling with a larger stride in traditional CNNs and can be considered as a higher-order graph pooling.
    After the first MP operation on $\A^{(0)}$, the node features are pooled by applying in cascade 3 decimation matrices, $\S^{(0)}$, $\S^{(1)}$, and $\S^{(2)}$.
    Afterwards, it is possible to directly perform a MP operation on $\A^{(3)}$, skipping the MP operations on $\A^{(1)}$ and $\A^{(2)}$.}
    \label{fig:lap_pyramid}
\end{figure}

\section{Analysis of the graph coarsening procedure and implementation details}
\label{sec:analysis}

\subsection{Numerical precision in eigendecomposition}
\begin{figure}[!ht]
    \centering
    \includegraphics[keepaspectratio,width=\columnwidth]{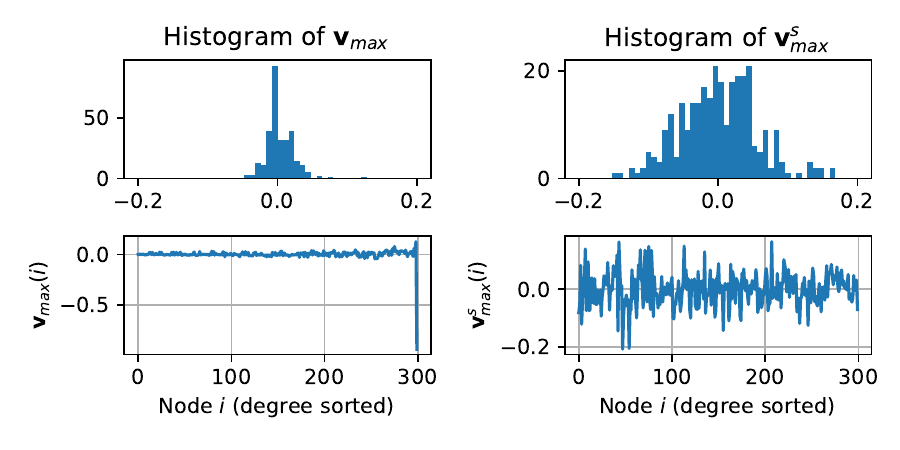}    
    \caption{(Left) distribution and values assumed by $\v_\text{max}$. (Right) distribution and values assumed by $\v_\text{max}$. The entries of the eigenvectors are sorted by node degree. A Stochastic Block Model graph was used in this example.}
    \label{fig:v_max}
\end{figure}
The entries of $\v_\text{max}$ associated with low-degree nodes assume very small values and their signs may be flipped due to numerical errors.
The partition obtained by using $\v^s_\text{max}$, i.e., the eigenvector of the symmetric Laplacian $\L_s$, in \eqref{eq:partition} is analytically the same.
Indeed, since $\v^s_\text{max} = \D^{-1/2} \v_\text{max}$, the values of the two eigenvectors are rescaled by positive numbers and, therefore, the sign of their components is the same.
However, a positive effect of the degree normalization is that the values in $\v^s_\text{max}$ associated to nodes with a low degree are amplified.

Fig.~\ref{fig:v_max} compares the values in the eigenvectors $\v_\text{max}$ and $\v^s_\text{max}$, computed on the same graph.
Since many values in $\v_\text{max}$ are concentrated around zero, partitioning the nodes according to the sign of the entries in $\v_\text{max}$ gives numerically unstable results.
On the other hand, since the values in $\v^s_\text{max}$ are distributed more evenly the nodes can be partitioned more precisely. 

Note that, even if the indexes of $\mathcal{V}^{+}$ are identified from the eigenvector of $\L_s$, the Kron reduction is still performed on the Laplacian $\L$.
In the supplementary material we report numerical differences in the size of the cut obtained on random graphs when using $\v_\text{max}$ or $\v^s_\text{max}$.

\subsection{Evaluation of the approximate \maxcut{} solution}
\label{sec:maxcut_eval}
Since computing the optimal \maxcut{} solution is NP-hard, it is generally not possible to evaluate the quality of the cut found by the proposed spectral method (Sect.~\ref{sec:maxcut}) in terms of discrepancy from the \maxcut{}.
Therefore, to assess the quality of a solution we consider the following bounds
\begin{equation}
    \label{eq:inequality}
    0.5 \leq \frac{\maxcut{}}{|\mathcal{E}|} \leq \frac{\lambda^\text{s}_\text{max}}{2} \leq 1.
\end{equation}
The value $\lambda^\text{s}_\text{max}/2$ is an upper-bound of $\maxcut{}/|\mathcal{E}|$, where $\lambda^\text{s}_\text{max}$ is the largest eigenvalue of $\L_s$ and $|\mathcal{E}| = \sum_{i,j} a_{ij}$.
The lower-bound $0.5$ is given by the \textit{random cut}, which uniformly assigns the nodes to the two sides of the partition.
\footnote{
\label{note_rc}
    Assume a graph without self-loops, i.e., $(i,i) \notin \mathcal{E}$.
    A random cut $\z$ is, on average, at least 0.5 of the optimal cut $\z^*$: $\mathbb{E}[|\z|] = \sum_{(i,j) \in \mathcal{E}} \mathbb{E}[z_i z_j] = \sum_{(i,j) \in \mathcal{E}} \text{Pr}[(i,j) \in \z] = \frac{|\mathcal{E}|}{2} \geq \frac{|\z^*|}{2}$.
} 
The derivation of the upper-bound is in App.~\ref{sec:upperbound_derivation}.

Let $\z$ be a partition vector such as the one in \eqref{eq:partition_vec}.
The cut size $\text{cut}(\z) = (\z^T \L \z)/2$ is the number of edges crossing the partition induced by $\z$ (derivation in App.~\ref{sec:upperbound_derivation}).
To measure the \emph{proportion} of edges cut by $\z$, we introduce the function $\gamma(\z)$, whose definition and bounds are the following:
\begin{equation}
    \label{eq:cut_size}
    0 \leq \gamma(\z) = \frac{\z^T \L \z}{2\sum_{i,j} a_{ij}} = \frac{2\cdot\text{cut}(\z)}{2|\mathcal{E}|} \leq \frac{\maxcut{}}{|\mathcal{E}|}.
\end{equation}
Note that $\gamma(\cdot)$ depends also on $\L$, but we keep it implicit to simplify the notation.

Let us now consider the best- and worst-case scenarios.
The best case is the bipartite graph, where the \maxcut{} is known and it cuts all the graph edges.
The partition $\z$ found by our spectral algorithm on bipartite graphs is optimal, \textit{i.e.}, $\gamma(\z) = \maxcut{}/|\mathcal{E}| = 1$.
In graphs that are close to be bipartite or, in general, that have a very sparse and regular connectivity, a large percentage of edges can be cut if the nodes are partitioned correctly.
Indeed, for these graphs the \maxcut{} is usually large and is closer to the upper-bound in \eqref{eq:inequality}.
On the other hand, in very dense graphs the \maxcut{} is smaller, as well as the gap between the upper- and lower-bound in \eqref{eq:inequality}. 
Notably, the worst-case scenario is a complete graph where is not possible to cut more than half of the edges, \textit{i.e.}, $\maxcut{}=0.5$.
We note that in graphs made of a sparse regular part that is weakly connected to a large dense part, the gaps in \eqref{eq:inequality} can be arbitrarily large.

The proposed spectral algorithm is not designed to handle very dense graphs; an intuitive explanation is that $\v^s_\text{max}$ can be interpreted as the graph signal with the highest frequency, since its sign oscillates as much as possible when transiting from a node to one of its neighbors. 
While such oscillation in the sign is clearly possible on bipartite graphs, in complete graphs it is not possible to find a signal that assumes an opposite sign on neighboring nodes, because all nodes are connected with each other.
Remarkably, the solution \eqref{eq:partition_vec} found by the spectral algorithm on very dense graphs can be worse than the random cut.
A theoretical result found by Trevisan~\cite{trevisan2012max} states that a spectral algorithm, like the one we propose, is guaranteed to yield a cut larger than the random partition only when $\lambda^s_\text{max} \geq 2(1-\tau) = 1.891$ (see App.~\ref{sec:trevisan} for details).

To illustrate how the size of the cut found by the spectral algorithm changes between the best- and worst-case scenarios, we randomly add edges to a bipartite graph until it becomes complete.
Fig.~\ref{fig:adding_edges} illustrates how the size of the cut $\gamma(\z)$ induced by the spectral partition $\z$ changes as more edges are added and the original structure of the graph is corrupted (blue line). The figure also reports the size of the random cut (orange line) and the \maxcut{} upper bound from Eq.~\eqref{eq:cut_size} (green line). The black line indicates the threshold from \cite{trevisan2012max}, i.e., the value of $\lambda^2_\text{max}/2$ below which the spectral cut is no longer guaranteed to be larger than the random cut.
The graph used to generate the figure is a regular grid; however, similar results hold also for other families of random graphs and are reported in the supplementary material.
\begin{figure}[!ht]
    \centering
    \includegraphics[keepaspectratio,width=.9\columnwidth]{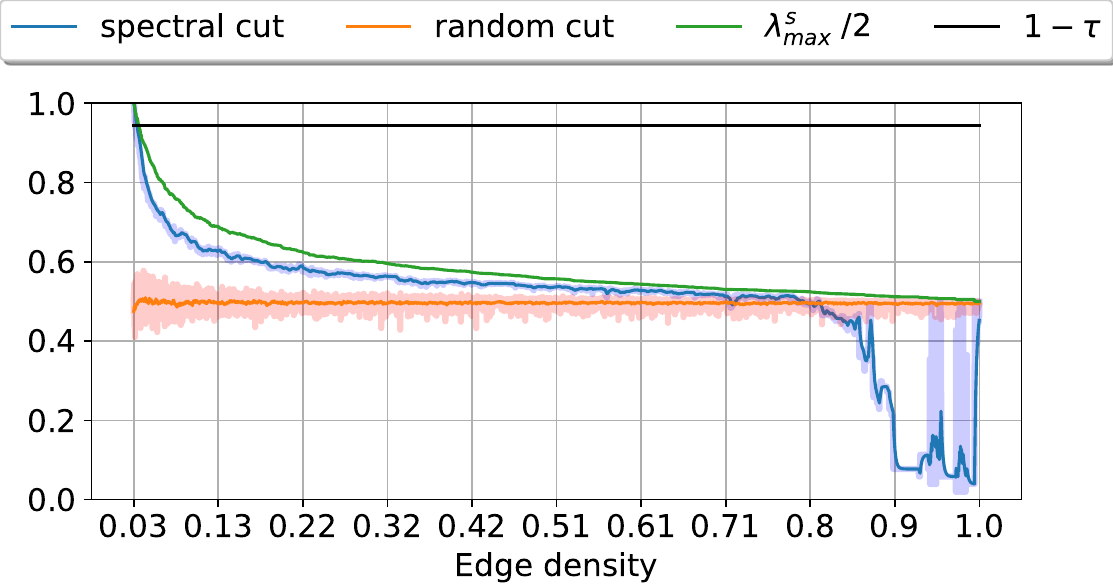}    
    \caption{
    Blue line: fraction of edges cut by the partition yield by the spectral algorithm. 
    Orange line: fraction of edges removed by a random cut. 
    Green line: the \maxcut{} upper bound as a function of the largest eigenvalue $\lambda^s_\text{max}$ of the symmetric Laplacian. 
    Black line: the threshold from \cite{trevisan2012max} indicating the value of $\lambda^s_\text{max}/2$ below which one should switch to the random cut to obtain a solution guaranteed to be $\geq 0.53\cdot$\maxcut{}. 
    The x-axis indicates the density of the graph connectivity, which increases by randomly adding edges.}
    \label{fig:adding_edges}
\end{figure}
%

Fig.~\ref{fig:adding_edges} shows that the spectral algorithm finds a better-than-random cut even when $\lambda^s_\text{max}/2 < 1-\tau$ (i.e., when the result from \cite{trevisan2012max} does not hold), and only approaches the size of the random cut when the edge density is very high (70\%-80\%). 

Importantly, when the size of the spectral partition becomes smaller than the random partition, the upper-bound $\lambda^s_\text{max}/2 \approx 0.5$, meaning that the random cut is very close to the \maxcut{}.
To obtain a cut that is always at least as good as the random cut, we first compute the partition $\z$ as in \eqref{eq:partition_vec} and evaluate its size $\gamma(\z)$: if $\gamma(\z) < 0.5$, we return a random partition instead.

We conclude by noticing that, due to the smoothing effect of MP operations, the nodes belonging to densely connected graph components are likely to have very similar representations computed by the GNN; it is, therefore, not important which of these nodes are dropped by a random cut.
The random cut in these cases not only is optimal in terms of the \maxcut{} objective, but it also introduces stochasticity that provides robustness when training the GNN model.

\subsection{Pseudocode}
\label{sec:pseudocode}

The procedure for generating the pyramid of coarsened adjacency matrices and the pooling matrices used for decimation is reported in Alg.~\ref{alg:coarsening}.
$\mathcal{L}$ is a list of positive integers indicating the levels in the pyramid of coarsened Laplacians that we want to compute. For instance, given levels $\mathcal{L} = [1, 3, 5]$ for a graph of $N$ nodes, the algorithm will return the coarsened graphs with approximately $N/2$, $N/8$, and $N/32$ nodes (in general, $N/2^{l_i}$ for each $l_i$ in $\mathcal{L}$). 
Matrix $\mathbf{R}$ is a buffer that accumulates decimation matrices when one or more coarsening levels are skipped. 
This happens when the GNN implements high order pooling, as discussed in Sect~\ref{sec:pooling_matrices}.

\begin{algorithm}\footnotesize
\caption{Graph coarsening}
\label{alg:coarsening}
\begin{algorithmic}[1]
\REQUIRE adjacency matrix $\A$, coarsening levels $\mathcal{L}$, sparsification threshold $\epsilon$
\ENSURE coarsened adjacency matrices $\mathcal{\bar A}$, decimation matrices $\mathcal{S}$
\STATE $\A^{(0)} = \A$, $\mathbf{R} = \I_N$, $\mathcal{A} = \{\}$, $\mathcal{S} = \{\}$, $l=0$
\WHILE{$l \leq \text{max}(\mathcal{L})$}
\STATE $\A^{(l+1)}, \S^{(l+1)} = \text{pool}(\A^{(l)})$ \label{line:pool}
\IF{$l \in \mathcal{L}$}
\STATE $\mathcal{A} = \mathcal{A} \cup \A^{(l+1)}$, $\mathcal{S} = \mathcal{S} \cup \S^{(l+1)}\mathbf{R}$
\STATE $\mathbf{R} = \I_{N_l}$
\ELSE
\STATE $\mathbf{R} = \S^{(l+1)}\mathbf{R}$
\ENDIF
\STATE $l = l+1$
\ENDWHILE
\STATE $\mathcal{\bar A} = \{\bar \A^{(l)}: \bar a^{(l)}_{ij} = a^{(l)}_{ij}$ if $a^{(l)}_{ij} > \epsilon$ and $0$ otherwise, $\forall \A^{(l)} \in \mathcal{A} \}$ 
\end{algorithmic}
\end{algorithm}

Alg.~\ref{alg:pooling} shows the details of the pooling function, used in line~\ref{line:pool} of Alg.~\ref{alg:coarsening}.

\begin{algorithm}\footnotesize
\caption{$\text{pool}(\cdot)$ function}
\label{alg:pooling}
\begin{algorithmic}[1]
\REQUIRE adjacency matrix $\A^{(l)} \in \mathbb{R}^{N_l \times N_l}$
\ENSURE coarsened adjacency matrix $\A^{(l+1)} \in \mathbb{R}^{N_{l+1} \times N_{l+1}}$, decimation matrix $\S^{(l+1)} \in \mathbb{N}^{N_{l+1} \times N_l}$
\STATE get $\L^{(l)} = \D^{(l)} - \A^{(l)}$  and $\L_s^{(l)} = \I - (\D^{(l)})^{-\frac{1}{2}} \A^{(l)} (\D^{(l)})^{-\frac{1}{2}}$ 
\STATE compute the eigenvector $\v^s_\text{max}$ of $\L_s^{(l)}$
\STATE partition vector $\z$ s.t. $z_i = 1$ if $\v^s_\text{max}[i] \geq 0$,  $z_i = -1$ if $\v^s_\text{max}[i] < 0$
\IF{$\gamma(\z) < 0.5$}
\STATE random sample $z_i \sim \{-1,1\}, \forall i=1, \dots, N_l$ (random cut)
\ENDIF
\STATE $\mathcal{V}^{+} = \{i: z_i=1\}$, $\mathcal{V}^{-} = \{i: z_i=-1\}$
\STATE $\L^{(l+1)} = \L^{(l)} \setminus \L^{(l)}_{\mathcal{V}^{-}, \mathcal{V}^{-}}$ (Kron reduction)
\STATE $\A^{(l+1)} = -\L^{(l+1)} + \text{diag}(\sum_{j\neq i} \L^{(l+1)}_{ij})$
\STATE $\S^{(l+1)} = [\I_{N_{l+1}}]_{\mathcal{V}^{+}, :}$
\end{algorithmic}
\end{algorithm}

\subsection{Computational cost analysis}
The most expensive operations in the NDP algorithm are i) the cost of computing the eigenvector $\v^s_\text{max}$, and ii) the cost of inverting the submatrix $\L_{\mathcal{V}^-, \mathcal{V}^-}$ within the Kron reduction.

Computing all eigenvectors has a cost $\mathcal{O}(N^3)$, where $N$ is the number of nodes. However, computing only the eigenvector corresponding to the largest eigenvalue is fast when using the power method \cite{watkins2004fundamentals}, which requires only few iterations (usually 5-10), each one of complexity $\mathcal{O}(N^2)$.
The cost of inverting $\L_{\mathcal{V}^-, \mathcal{V}^-}$ is $\mathcal{O}(|\mathcal{V}^-|^3)$, where $|\mathcal{V}^-|$ is the number of nodes that are dropped. 

We notice that the coarsened graphs are pre-computed before training the GNN.
Therefore, the computational time of graph coarsening is much lower compared to training the GNN for several epochs, since each MP operation in the GNN has a cost $\mathcal{O}(N^2)$.

\subsection{Structure of the sparsified graphs}
When applying the sparsification, the spectrum of the resulting adjacency matrix $\bar{\A}$ is preserved, up to a small factor that depends on $\epsilon$, with respect to the spectrum of $\A$.

\begin{theorem}
Let $\mathbf{Q}$ be a matrix used to remove small values in the adjacency matrix $\A$, which is defined as
\begin{equation}
    \mathbf{Q} = 
    \begin{cases}
    q_{ij} = - a_{ij}, & \;\; \text{if} \;\; |a_{ij}| \leq \epsilon \\
    q_{ij} = 0, &\;\; \text{otherwise}.\\
    \end{cases}
\end{equation}
Each eigenvalue $\bar \alpha_i$ of the sparsified adjacency matrix $\mathbf{\bar \A} = \A + \mathbf{Q}$ is bounded by 
\begin{equation}
    \bar \alpha_i \leq \alpha_i + \u_i^T \mathbf{Q} \u_i, 
\end{equation}
where $\alpha_i$ and $\u_i$ are eigenvalue-eigenvector pairs of $\A$.
\end{theorem}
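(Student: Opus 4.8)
The plan is to treat this as a one-sided eigenvalue-perturbation statement and attack it through the Rayleigh-quotient (Courant--Fischer) variational characterization, since that is the natural tool for producing an \emph{upper} bound on each eigenvalue. First I would record that all three matrices are symmetric: $\A$ is symmetric by assumption, and $\mathbf{Q}$ inherits symmetry because the thresholding rule depends only on $|a_{ij}| = |a_{ji}|$, so $\mathbf{\bar \A} = \A + \mathbf{Q}$ is symmetric and admits a full orthonormal eigenbasis. I would fix the descending ordering $\alpha_1 \ge \alpha_2 \ge \dots \ge \alpha_N$ with orthonormal eigenvectors $\u_1, \dots, \u_N$ of $\A$, and the analogous ordering $\bar\alpha_1 \ge \dots \ge \bar\alpha_N$ of $\mathbf{\bar \A}$, so that the index $i$ in the claim refers to matched positions in these two orderings.

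The core computation is short. By the min--max theorem applied to $\mathbf{\bar \A}$,
$$\bar\alpha_i = \min_{\dim V = N-i+1}\ \max_{\substack{\x \in V \\ \|\x\| = 1}} \x^T \mathbf{\bar \A}\, \x.$$
I would then bound the outer minimum from above by evaluating the inner maximum on the single test subspace $V = \mathrm{span}(\u_i, \u_{i+1}, \dots, \u_N)$, which has the required dimension $N-i+1$. For a unit vector $\x = \sum_{k \ge i} c_k \u_k \in V$ I split $\x^T \mathbf{\bar \A}\, \x = \x^T \A\, \x + \x^T \mathbf{Q}\, \x$, and since $\x^T \A\, \x = \sum_{k \ge i} c_k^2 \alpha_k \le \alpha_i$ (every $\alpha_k \le \alpha_i$ for $k \ge i$ and $\sum c_k^2 = 1$), I obtain
$$\bar\alpha_i \le \alpha_i + \max_{\substack{\x \in V \\ \|\x\| = 1}} \x^T \mathbf{Q}\, \x.$$
This is already a clean, fully rigorous upper bound of the advertised shape.

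The main obstacle is the final identification of $\max_{\x \in V} \x^T \mathbf{Q}\, \x$ with the single Rayleigh value $\u_i^T \mathbf{Q}\, \u_i$ that appears in the statement; these coincide only when $\u_i$ happens to maximize the $\mathbf{Q}$-form over $V$, which is not true in general. I expect this gap to be closed exactly as in first-order (Rayleigh--Schr\"odinger) perturbation theory: writing $\mathbf{\bar \A} = \A + \mathbf{Q}$ with $\mathbf{Q}$ small (its nonzero entries are all bounded by $\epsilon$), the maximizer of the $\mathbf{Q}$-form over $V$ is $\u_i$ up to an $O(\|\mathbf{Q}\|)$ correction, and the induced change in the bound is second order, $O(\|\mathbf{Q}\|^2)$. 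Equivalently, differentiating the simple eigenvalue $\bar\alpha_i(t)$ of $\A + t\mathbf{Q}$ at $t = 0$ gives $\tfrac{d}{dt}\bar\alpha_i = \u_i^T \mathbf{Q}\, \u_i$, so $\u_i^T \mathbf{Q}\, \u_i$ is precisely the first-order predicted shift. I would therefore present the statement as the first-order bound, being explicit that the inequality holds up to the neglected $O(\|\mathbf{Q}\|^2)$ term (or, if one insists on a strict inequality for every $\epsilon$, replacing $\u_i^T \mathbf{Q}\, \u_i$ by the restricted maximum above, which is the price of full rigor). Finally, to argue the bound is genuinely small I would connect it back to $\epsilon$: since each entry of $\mathbf{Q}$ has magnitude at most $\epsilon$, $|\u_i^T \mathbf{Q}\, \u_i| \le \|\mathbf{Q}\|_2 \le \|\mathbf{Q}\|_F \le \epsilon N$, confirming that the spectrum is preserved up to a factor controlled by $\epsilon$.
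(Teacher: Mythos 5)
Your proposal is correct, and its core argument is genuinely different from the paper's. The paper proves the theorem by classical first-order (Rayleigh--Schr\"odinger) perturbation theory: it writes the perturbed eigenproblem $(\A + \epsilon \mathbf{P})(\u_i + \u_\epsilon) = (\alpha_i + \alpha_\epsilon)(\u_i + \u_\epsilon)$ with $p_{ij} = \mathrm{sign}(q_{ij})$, expands the eigenvector correction $\u_\epsilon$ in the eigenbasis of $\A$, cancels the unperturbed equation and all $\mathcal{O}(\epsilon^2)$ terms, projects onto $\u_i$ to get $\alpha_\epsilon = \epsilon\, \u_i^T \mathbf{P} \u_i$, and finally invokes an entrywise comparison ``$\mathbf{Q} \leq \epsilon \mathbf{P}$'' to conclude. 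Your route instead starts from Courant--Fischer, which yields the fully rigorous Weyl-type bound $\bar\alpha_i \leq \alpha_i + \max_{\x \in V, \|\x\|=1} \x^T \mathbf{Q}\, \x \leq \alpha_i + \lambda_{\max}(\mathbf{Q})$, valid for every $\epsilon$ with no asymptotic approximation; you then correctly observe that replacing the restricted maximum by the single value $\u_i^T \mathbf{Q}\, \u_i$ is exactly a first-order identification, valid only up to $\mathcal{O}(\|\mathbf{Q}\|^2)$. This diagnosis is accurate and, in fact, exposes the true character of the paper's own argument, which silently discards second-order terms, so the stated inequality is likewise only a first-order statement there. Your closing remark that $\frac{d}{dt}\bar\alpha_i(t)\big|_{t=0} = \u_i^T \mathbf{Q}\, \u_i$ for $\A + t\mathbf{Q}$ recovers the paper's computation more cleanly, since it perturbs directly by $\mathbf{Q}$ and thereby avoids the paper's weakest step: the entrywise relation between $\mathbf{Q}$ and $\epsilon\mathbf{P}$ does not by itself imply the quadratic-form inequality $\u_i^T \mathbf{Q}\, \u_i \leq \epsilon\, \u_i^T \mathbf{P}\, \u_i$, because the entries of $\u_i$ need not be nonnegative. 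In short: the paper buys the exact form of the stated bound at the price of rigor (neglected second-order terms and a loose matrix comparison); your variational argument buys unconditional rigor at the price of a slightly weaker constant, and you are explicit about the trade, which is the honest way to present this result.
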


\begin{proof}
Let $\mathbf{P}$ be a matrix with elements $p_{ij} = \text{sign}(q_{ij})$ and consider the perturbation $\A + \epsilon \mathbf{P}$, which modifies the eigenvalue problem $\A \u_i = \alpha_i \u_i$ in 
\begin{equation}
\label{eq:eigen_prob}
    (\A + \epsilon \mathbf{P})(\u_i + \u_{\epsilon}) = (\alpha_i + \alpha_{\epsilon}) (\u_i + \u_{\epsilon}).
\end{equation}
where $\alpha_{\epsilon}$ is a small number and $\u_{\epsilon}$ a small vector, which are unknown and indicate a  perturbation on $\alpha_i$ and $\u_i$, respectively.
By expanding \eqref{eq:eigen_prob}, then canceling the equation $\A \u_i = \alpha_i \u_i$ and the high order terms $\mathcal{O}(\epsilon^2)$, one obtains

\begin{equation}
\label{eq:expansion1}
    \A\u_{\epsilon} + \epsilon \mathbf{P}\u_i = \alpha_i \u_{\epsilon} + \alpha_{\epsilon}\u_i.
\end{equation}

Since $\A$ is symmetric, its eigenvectors can be used as a basis to express the small vector $\u_{\epsilon}$
\begin{equation}
    \label{eq:basis}
    \u_{\epsilon} = \sum \limits_{j=1}^N \delta_j \u_j,
\end{equation}
where $\delta_j$ are (small) unknown coefficients. 
Substituting \eqref{eq:basis} in \eqref{eq:expansion1} and bringing $\A$ inside the summation, gives
\begin{equation}
    \label{eq:expansion2}
    \sum \limits_{j=1}^N \delta_j \A \u_j + \epsilon \mathbf{P}\u_i =   \alpha_i \sum \limits_{j=1}^N \delta_j \u_j + \alpha_{\epsilon}\u_i.
\end{equation}

By considering the original eigenvalue problem that gives $\sum \limits_{j=1}^N \delta_j \A \u_j = \sum \limits_{j=1}^N \delta_j \alpha_j \u_j$ and by left-multiplying each term with $\u_i^T$, \eqref{eq:expansion2} becomes
\begin{equation}
    \label{eq:expansion3}
    \u_i^T \sum \limits_{j=1}^N \delta_j \alpha_j \u_j + \u_i^T \epsilon \mathbf{P}\u_i =   \u_i^T \alpha_i \sum \limits_{j=1}^N \delta_j \u_j + \u_i^T \alpha_{\epsilon}\u_i.
\end{equation}
Since eigenvectors are orthogonal, $\u_i^T \u_j = 0, \forall j \neq i$ and $\u_i^T \u_j = 1$, for $j = i$, Eq. \eqref{eq:expansion3} becomes
\begin{equation}
    \begin{aligned}
    \u_i^T \delta_i \alpha_i \u_i + \u_i^T \epsilon \mathbf{P}\u_i & = \u_i^T \alpha_i \delta_i \u_i + \u_i^T \alpha_i\u_i, \\
    \u_i^T \epsilon \mathbf{P}\u_i & =  \u_i^T \alpha_{\epsilon}\u_i = \alpha_{\epsilon},
    \end{aligned}
\end{equation}
which, in turn, gives 
\begin{equation}
    \alpha_{\epsilon} = \u_i^T \epsilon \mathbf{P} \u_i \geq \u_i^T \mathbf{Q} \u_i,
\end{equation}
as $\mathbf{Q} \leq \epsilon \mathbf{P}$.
\end{proof}

A common way to measure the similarity of two graphs is to compare the spectrum of their Laplacians.
To extend the results of Theorem 1 to the spectra of the Laplacians $\L$ and $\bar \L$, respectively associated with the original and sparsified adjacency matrices $\A$ and $\bar \A$, it is necessary to consider the relationships between the eigenvalues of $\A$ and $\L$.
For a $d$-regular graph, the relationship $\lambda_i = d - \alpha_i$ links the $i$-th eigenvalue $\lambda_i$ of $\L$ to the $i$-th eigenvalue $\alpha_i$ of $\A$~\cite{lutzeyer2017comparing}.
However, for a general graph it is only possible to derive a loose bound, $d_\text{max} - \alpha_n \leq \lambda_n \leq d_\text{max} - \alpha_1$, that depends on the maximum degree $d_\text{max}$ of the graph~\cite[Lemma 2.21]{zumstein2005comparison}.

Therefore, we numerically compare the spectra of the Laplacians associated with the matrices in $\mathbf{A}$ and $\mathcal{\bar \A}$. 
In particular, Fig.~\ref{fig:spectra_diff}(\textit{top-left}) depicts the spectrum of the Laplacian associated to the original graph $\A^{(0)}$ (black dotted line) and the spectra $\Lambda(\L^{(1)})$, $\Lambda(\L^{(2)})$, $\Lambda(\L^{(3)})$ of the Laplacians associated with $\A^{(1)}$, $\A^{(2)}$, and $\A^{(3)}$.
Fig.~\ref{fig:spectra_diff}(\textit{top-right}) depicts the spectra of the Laplacians $\bar \L^{(1)}$, $\bar \L^{(2)}$, $\bar \L^{(3)}$ associated with the sparsified matrices $\bar \A^{(1)}$, $\bar \A^{(2)}$, and $\bar \A^{(3)}$.
It is possible to observe that the spectra of $\L^{(l)}$ and $\bar \L^{(l)}$ are almost identical and therefore, to better visualize the differences, we show in Fig.~\ref{fig:spectra_diff}(\textit{bottom}) the absolute differences $|\Lambda (\L^{(l)}) - \Lambda(\bar \L^{(l)})|$.
The graph used in Fig.~\ref{fig:spectra_diff} is a random sensor network and the sparsification threshold is $\epsilon = 10^{-2}$, which is the one adopted in all our experiments.
\begin{figure}[!ht]
    \centering
    \subfigure{
        \includegraphics[width=0.48\columnwidth]{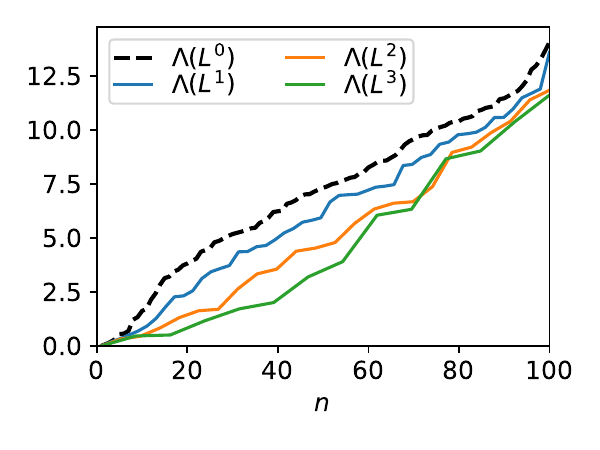}
    }\hspace{-.6cm}%
    ~
    \subfigure{
        \includegraphics[width=0.48\columnwidth]{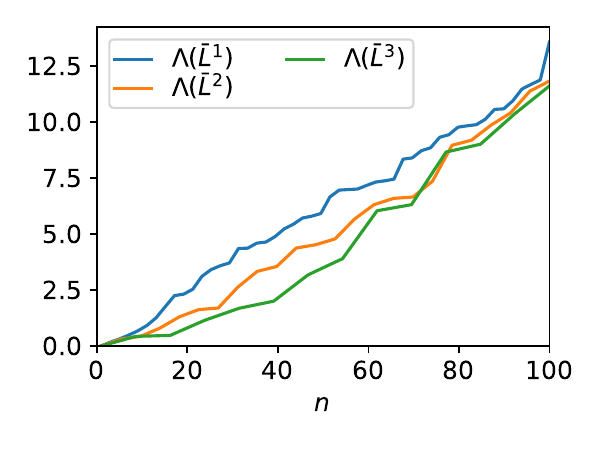}
    }
    \subfigure{
        \includegraphics[width=.7\columnwidth]{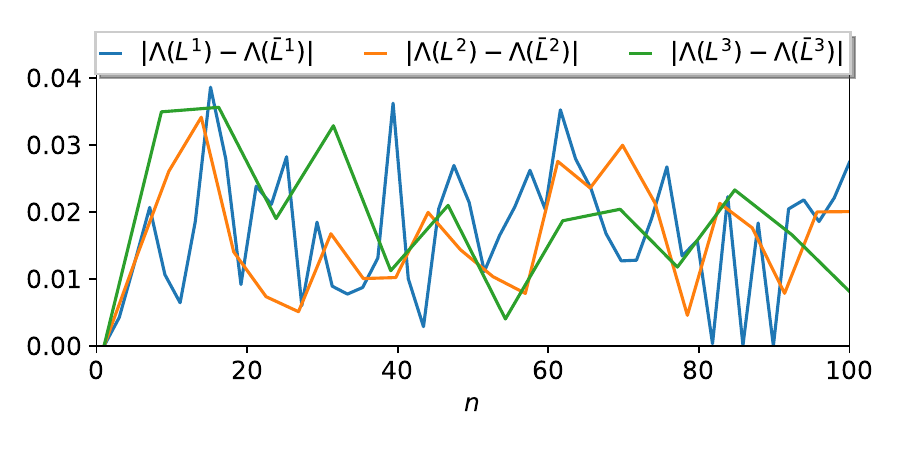}
    }

    \caption{
    \textit{Top-left}: Spectrum of the Laplacians associated with the original adjacency $\A^{(0)}$ and the coarsened versions $\A^{(1)}$, $\A^{(2)}$, and $\A^{(3)}$ obtained with the NDP algorithm. 
    \textit{Top-right}: Spectrum of the Laplacians associated with the sparsified adjacency matrices $\bar \A^{(1)}$, $\bar \A^{(2)}$, and $\bar \A^{(3)}$.
    \textit{Bottom}: Absolute difference between the spectra of the Laplacians.
    }
    \label{fig:spectra_diff}
\end{figure}

To quantify how much the coarsened graph changes as a function of $\epsilon$, we consider the \emph{spectral distance} that measures a dissimilarity between the spectra of the Laplacians associated with $\A$ and $\bar \A$~\cite{loukas2019graph}. 
The spectral distance is computed as 
\begin{equation}
    SD(\L, \bar \L; \epsilon) = \frac{1}{K} \sum_{k=2}^{K+1} \frac{|\bar\lambda_k(\epsilon) - \lambda_k|}{\lambda_k},
\end{equation}
where $\{ \lambda_k \}_{k=2}^{K+1}$ and $\{ \bar \lambda_k(\epsilon) \}_{k=2}^{K+1}$ are, respectively, the $K$ smallest non-zero eigenvalues of $\L$ and $\bar{\L}$.

\begin{figure}[!ht]
	    \centering
        \includegraphics[keepaspectratio,width=0.3\textwidth]{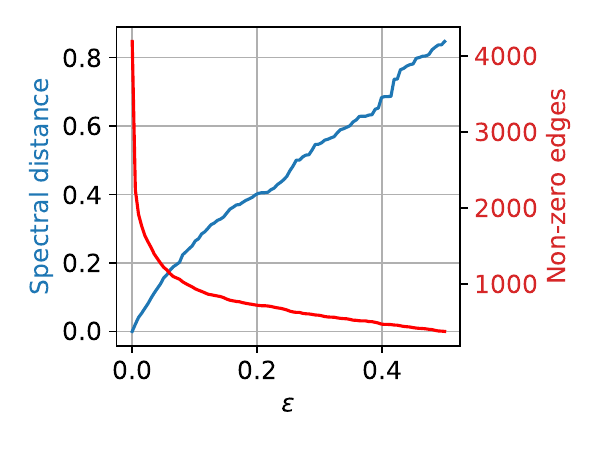}
	    \caption{In blue, the variation of spectral distance between the Laplacian $\L$ and the Laplacian $\bar{\L}$, associated with the adjacency matrix $\A$ sparsified with threshold $\epsilon$. In red, the number of edges that remain in $\bar{\L}$.}
	    \label{fig:varying_eps}
\end{figure}
Fig.~\ref{fig:varying_eps} depicts in blue the variation of spectral distance between $\L$ and $\bar{\L}$, as we increase the threshold $\epsilon$ used to compute $\bar \A$. 
The red line indicates the number of edges that remain in $\bar{\A}$ after sparsification.
It is possible to see that for small increments of $\epsilon$ the spectral distance increases linearly, while the number of edges in the graph drops exponentially.
Therefore, with a small $\epsilon$ it is possible to discard a large amount of edges with minimal changes in the graph spectrum.

The graph used to generate Fig.~\ref{fig:varying_eps} is a sensor network; results for other types of graph are in the supplementary material.

\section{Related work on graph pooling}
\label{sec:related_work}

We discuss related work on GNN pooling by distinguishing between \textit{topological} and \textit{feature-based} pooling methods.

\paragraph{Topological pooling methods}
similarly to NDP, topological pooling methods pre-compute coarsened graphs before training based on their topology.
Topological pooling methods are usually unsupervised, as they define how to coarsen the graph outside of the learning procedure. 
The GNN is then trained to fit its node representations to these pre-determined structures.
Pre-computing graph coarsening not only makes the training much faster by avoiding to perform graph reduction at every forward pass, but it also provides a strong inductive bias that prevents degenerate solutions, such as entire graphs collapsing into a single node or entire graph sections being discarded.
This is important when dealing with small datasets or, as we show in the following section, in tasks such as graph signal classification.

The approach that is most related to NDP and that has been adopted in several GNN architectures to perform pooling~\cite{bruna2013spectral, defferrard2016convolutional, monti2017geometric, fey2018splinecnn, levie2017cayleynets}, consists of coarsening the graph with GRACLUS, a hierarchical spectral clustering algorithm~\cite{dhillon2004kernel}. 
At each level $l$, two vertices $i$ and $j$ are clustered together in a new vertex $k$.
Then, a standard pooling operation (average or max pool) is applied to compute the node feature $\x^{(l+1)}_k$ from $\x^{(l)}_i$ and $\x^{(l)}_j$.
This approach has several drawbacks.
First, the connectivity of the original graph is not preserved in the coarsened graphs and the spectrum of their associated Laplacians is usually not contained in the spectrum of the original Laplacian.
Second, GRACLUS pooling adds ``fake'' nodes so that they can be exactly halved at each pooling step; this not only injects noisy information in the graph signal, but also increases the computational complexity in the GNN. 
Finally, clustering depends on the initial ordering of the nodes, which hampers stability and reproducibility.


An alternative approach is to directly cluster the rows (or the columns) of the adjacency matrix, as done by approach proposed in~\cite{bacciu2019non}, which decomposes $\A$ in two matrices $\mathbf{W} \in \mathbb{R}^{N \times K}$ and $\mathbf{H} \in \mathbb{R}^{K \times N}$ using the Non-negative Matrix Factorization (NMF) $\A \approx \mathbf{W}\mathbf{H}$.
The NMF inherently clusters the columns of $\A$ since the minimization of the NMF objective is equivalent to the objective in $k$-means clustering~\cite{ding2005equivalence}.
In particular, $\mathbf{W}$ is interpreted as the cluster representatives matrix and $\mathbf{H}$ as a soft-cluster assignment matrix of the columns in $\A$. 
Therefore, the pooled node features and the coarsened graph can be obtained as $\X^{(1)} = \mathbf{H}^T\X$ and $\A^{(1)} = \mathbf{H}^T \A \mathbf{H}$, respectively.
The main drawback is that NMF does not scale well to large graphs.


\paragraph{Feature-based pooling methods}
these methods compute a coarsened version of the graph through differentiable functions, which are parametrized by weights that are optimized for the task at hand.
Differently from topological pooling, these methods account for the node features, which change as the GNN is trained.
While this gives more flexibility in adapting the coarsening on the data and the task at hand, GNNs with feature-based pooling have more parameters; as such, training is slower and more difficult.

\textit{DiffPool}~\cite{ying2018hierarchical} is a pooling method that learns differentiable soft assignments to cluster the nodes at each layer. 
DiffPool uses two MP layers in parallel: one to update the node features, and one to generate soft cluster assignments.
The original adjacency matrix acts as a prior when learning the cluster assignments, while an entropy-based regularization encourages sparsity in the cluster assignments.
The application of this method to large graphs is not practical, as the cluster assignment matrix is dense and its size is $N \times K$, where $K$ is the number of nodes of the coarsened graph.

A second approach, dubbed \textit{Top-$K$} pooling~\cite{graphunet, cangea2018towards}, learns a projection vector that is applied to each node feature to obtain a score. 
The nodes with the $K$ highest scores are retained, while the remaining ones are dropped.
Since the top-$K$ selection is not differentiable, the scores are also used as a gating for the node features, allowing gradients to flow through the projection vector during backpropagation.
Top-$K$ is more memory efficient than DiffPool as it avoids generating cluster assignments. 
A variant proposed in~\cite{knyazev2019understanding} introduces in Top-$K$ pooling a soft attention mechanism for selecting the nodes to retain.
Another variant of Top-$K$, called SAGPool, processes the node features with an additional MP layer before using them to compute the scores~\cite{pmlr-v97-lee19c}.


\section{Experiments}
\label{sec:experiments}

We consider two tasks on graph-structured data: graph classification and graph signal classification.
The code used in all experiments is based on the Spektral library \cite{grattarola2020graph}, and the code to replicate all experiments of this paper is publicly available at GitHub.\footnote{\url{github.com/danielegrattarola/decimation-pooling}}

\subsection{Graph classification}
In this task, the $i$-th sample is a graph represented by the pair $\{ \A_i, \X_i \}$ which must be classified with a label $\mathbf{y}_i$.
We consider 2 synthetic datasets (\textit{Bench-easy} and \textit{Bench-hard})\footnote{\url{https://github.com/FilippoMB/Benchmark_dataset_for_graph_classification}} and 8 datasets of real-world graphs: \textit{Proteins}, \textit{Enzymes}, \textit{NCI1}, \textit{MUTAG}, \textit{Mutagenicity}, \textit{D\&D}, \textit{COLLAB}, and \textit{Reddit-Binary}.\footnote{\url{http://graphlearning.io}}
When node features $\X$ are not available, we use node degrees and clustering coefficients as a surrogate.
Moreover, we also use node labels as node features whenever they are available.

In the following, we compare NDP with GRACLUS~\cite{defferrard2016convolutional}, NMF~\cite{bacciu2019non}, DiffPool~\cite{ying2018hierarchical}, and Top-$K$~\cite{graphunet}.
In each experiment we adopt a fixed network architecture, MP(32)-P(2)-MP(32)-P(2)-MP(32)-AvgPool-Softmax, where MP(32) stands for a MP layer as described in \eqref{eq:mp} configured with 32 hidden units and ReLU activations, P(2) is a pooling operation with stride 2, AvgPool is a global average pooling operation on all the remaining graph nodes, and Softmax indicates a dense layer with Softmax activation.
As training algorithm, we use Adam~\cite{kingma2014adam} with initial learning rate 5e-4 and L\textsubscript{2} regularization with weight 5e-4.
As an exception, for the Enzymes dataset we used MP(64). 

Additional baselines are the Weisfeiler-Lehman (WL) graph kernel~\cite{shervashidze2011weisfeiler}, a GNN with only MP layers (\textit{Flat}), and a network with only dense layers (\textit{Dense}).
The comparison with \textit{Flat} helps to understand whether pooling operations are useful for a given task.
The results obtained by \textit{Dense}, instead, help to quantify how much additional information is brought by the graph structure compared to considering the node features alone.
While recent graph kernels~\cite{yanardag2015deep, martino2019hyper, togninalli2019wasserstein} and GNN architectures~\cite{bai2020learning, jiang2019walk} could be considered as further baselines for graph classification, the focus of our analysis and discussion is on graph pooling operators and, therefore, we point the interested reader towards the referenced papers. 

To train the GNN on mini-batches of graphs with a variable number of nodes, we consider the disjoint union of the graphs in each mini-batch and train the GNN on the combined Laplacians and graph signals.
See the supplementary material for an illustration.

\bgroup
\def\arraystretch{1} 
\setlength\tabcolsep{1em} 
\begin{table*}[!ht]
\footnotesize
\centering
\caption{Graph classification accuracy. Significantly better results ($p < 0.05$) are in bold.} 
\label{tab:graph_class}
\begin{tabular}{lcccccccc}
\cmidrule[1.5pt]{1-9}
\textbf{Dataset} & \textbf{WL} & \textbf{Dense} & \textbf{Flat} & \textbf{Diffpool} & \textbf{Top-$K$}  & \textbf{GRACLUS} & \textbf{NMF} & \textbf{NDP}\\
\cmidrule[.5pt]{1-9}
Bench-easy    & 92.6 & 29.3\t{$\pm$0.3} & 98.5\t{$\pm$0.3} & \textbf{98.6\t{$\pm$0.4}} & 82.4\t{$\pm$8.9} & 97.5\t{$\pm$0.5} & 97.4\t{$\pm$0.8} & 97.4\t{$\pm$0.9}\\ 
Bench-hard    & 60.0 & 29.4\t{$\pm$0.3} & 67.6\t{$\pm$2.8} & 69.9\t{$\pm$1.9} & 42.7\t{$\pm$15.2}  & 69.0\t{$\pm$1.5} & 68.6\t{$\pm$1.6} & \textbf{71.9\t{$\pm$0.8}}\\  
Proteins      & 71.2\t{$\pm$2.6}  & 68.7\t{$\pm$3.3} & 72.6\t{$\pm$4.8} & 72.8\t{$\pm$3.5} & 69.6\t{$\pm$3.3} & 70.3\t{$\pm$2.6} & 71.6\t{$\pm$4.1} & \textbf{73.4\t{$\pm$3.1}}\\ 
Enzymes       & 33.6\t{$\pm$4.1}  & 45.7\t{$\pm$9.9} & \textbf{52.0\t{$\pm$12.3}} & 24.6\t{$\pm$5.3} & 31.4\t{$\pm$6.8} & 42.0\t{$\pm$6.7} & 39.9\t{$\pm$3.6} & 44.5\t{$\pm$7.4}\\ 
NCI1          & \textbf{81.1\t{$\pm$1.6}}  & 53.7\t{$\pm$3.0} & 74.4\t{$\pm$2.5} & 76.5\t{$\pm$2.2} & 71.8\t{$\pm$2.6} & 69.5\t{$\pm$1.7} & 68.2\t{$\pm$2.2} & 74.2\t{$\pm$1.7}\\ 
MUTAG         & 78.9\t{$\pm$13.1} & \textbf{91.1\t{$\pm$7.1}} & 87.1\t{$\pm$6.6} & 90.5\t{$\pm$3.9} & 85.5\t{$\pm$11.0} & 84.9\t{$\pm$8.1} & 76.7\t{$\pm$14.4} & 87.9\t{$\pm$5.7}\\ 
Mutagenicity  & \textbf{81.7\t{$\pm$1.1}}  & 68.4\t{$\pm$0.3} & 78.0\t{$\pm$1.3} & 77.6\t{$\pm$2.7} & 71.9\t{$\pm$3.7} & 74.4\t{$\pm$1.8} & 75.5\t{$\pm$1.7} & 77.9\t{$\pm$1.4}\\ 
D\&D          & 78.6\t{$\pm$2.7}  & 70.6\t{$\pm$5.2} & 76.8\t{$\pm$1.5} & \textbf{79.3\t{$\pm$2.4}} & 69.4\t{$\pm$7.8} & 70.5\t{$\pm$4.8} & 70.6\t{$\pm$4.1} & 72.8\t{$\pm$5.4}\\ 
COLLAB        & 74.8\t{$\pm$1.3}  & 79.3\t{$\pm$1.6} & \textbf{82.1\t{$\pm$1.8}} & 81.8\t{$\pm$1.4} & 79.3\t{$\pm$1.8} & 77.1\t{$\pm$2.1} & 78.5\t{$\pm$1.8} & 79.1\t{$\pm$1.3} \\ 
Reddit-Binary & 68.2\t{$\pm$1.7}  & 48.5\t{$\pm$2.6} & 80.3\t{$\pm$2.6} & 86.8\t{$\pm$2.1} & 74.7\t{$\pm$4.5} & 79.2\t{$\pm$0.4} & 52.0\t{$\pm$2.1} & \textbf{88.0\t{$\pm$1.4}} \\ 
\cmidrule[1.5pt]{1-9}
\end{tabular}
\end{table*}
\egroup

We evaluate the model's performance by splitting the dataset in 10 folds. 
Each fold is, in turn, selected as the test set, while the remaining 9 folds become the training set.
For each different train/test split, we set aside 10\% of the training data as validation set, which is used for early stopping, i.e., we interrupt the training procedure after the loss on the validation set does not decrease for 50 epochs.

We report in Table~\ref{tab:graph_class} the test accuracy averaged over the 10 folds.
We note that no architecture outperforms every other in all tasks.
The WL kernel achieves the best results on NCI1 and Mutagenicity, but it does not perform well on the other datasets.
Interestingly, the \textit{Dense} architecture achieves the best performance on MUTAG, indicating that in this case, the connectivity of the graps does not carry useful information for the classification task.
The performance of the \textit{Flat} baseline indicates that in Enzymes and COLLAB pooling operations are not necessary to improve the classification accuracy.

NDP consistently achieves a higher accuracy compared to GRACLUS and NMF, which are also topological pooling methods.
We argue that the lower performance of GRACLUS is due to the fake nodes, which introduce noise in the graphs.
Among the two feature-based pooling methods, DiffPool always outperforms Top-$K$.
The reason is arguably that Top-$K$ drops entire parts of the graphs, thus discarding important information for the classification~\cite{knyazev2019understanding, icml2020_1614}.

In Fig.~\ref{fig:train_time}, we report the training time for the five different pooling methods.
As expected, GNNs configured with GRACLUS, NMF, and NDP are much faster to train compared to those based on DiffPool and Top$K$, with NDP being slightly faster than the other two topological methods.
\begin{figure*}
    \centering
    \includegraphics[keepaspectratio, width=0.9\textwidth]{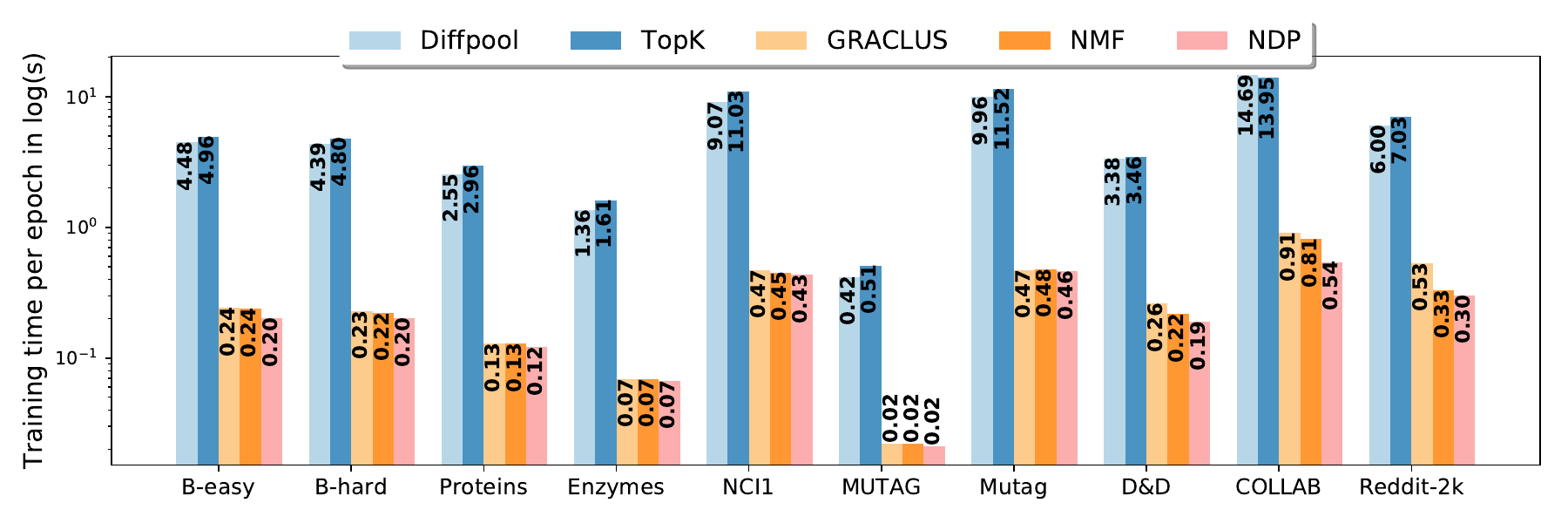}    
    \caption{Average training time per epoch (in seconds) for different pooling methods. The bars height is in logarithmic scale. Simulations were performed with an Nvidia RTX 2080 Ti.}
    \label{fig:train_time}
\end{figure*}
In Fig.~\ref{fig:acc_vs_time}, we plot the average training time per epoch against the average accuracy obtained by each pooling method on the 10 datasets taken into account. 
The scatter plot is obtained from the data reported in Tab.~\ref{tab:graph_class} and Fig.~\ref{fig:train_time}.
On average, NDP obtains the highest classification accuracy, slightly outperforming even Diffpool, while being, at the same time, the fastest among all pooling methods.
\begin{figure}
    \centering
    \includegraphics[keepaspectratio, width=0.5\columnwidth]{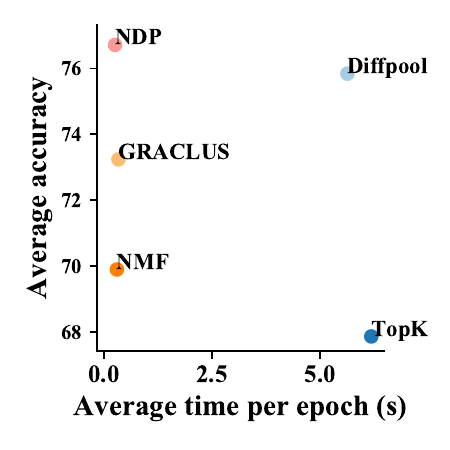}    
    \caption{Average training time per epoch against average accuracy, computed for each pooling method over the 10 graph classification tasks.}
    \label{fig:acc_vs_time}
\end{figure}

\begin{figure}[!ht]
    \centering
    \subfigure[Original graph.]{
        \includegraphics[width=0.5\columnwidth]{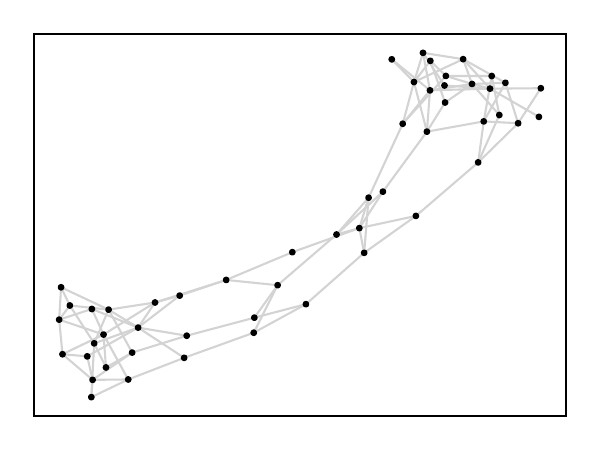}
    }\vspace{-.3cm}
    
    \subfigure[GRACLUS coarsening.]{
        \includegraphics[width=\columnwidth]{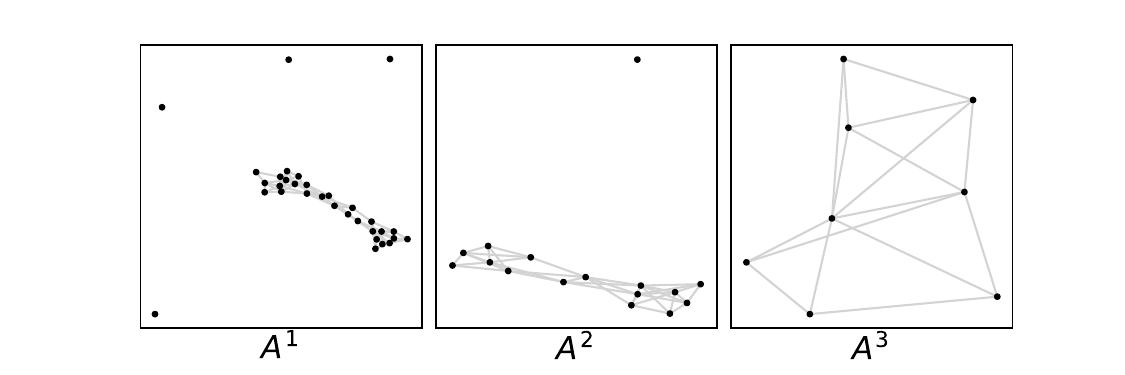}
    }\vspace{-.3cm}
    
    \subfigure[NMF coarsening.]{
        \includegraphics[width=\columnwidth]{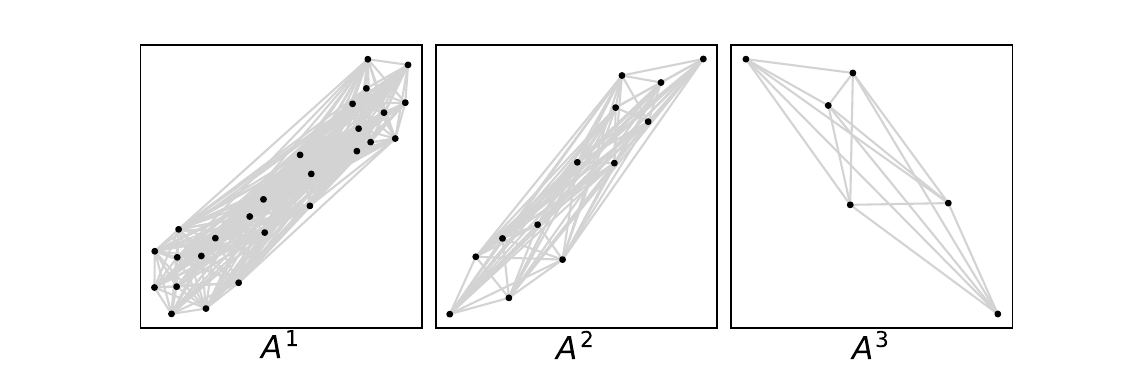}
    }\vspace{-.3cm}
    
    \subfigure[NDP coarsening.]{
        \includegraphics[width=\columnwidth]{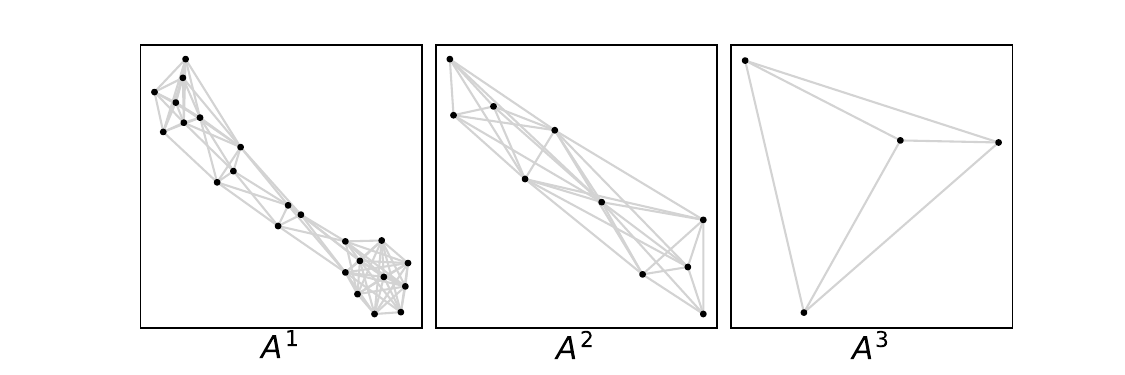}
    }
    \caption{\footnotesize Example of coarsening on one graph from the Proteins dataset. In (a), the original adjacency matrix of the graph. In (b), (c), and (d) the edges of the Laplacians at coarsening level 0, 1, and 2, as obtained by the 3 different pooling methods GRACLUS, NMF, and the proposed NDP.}
    \label{fig:coarsening_enzymes}
\end{figure}

To understand the differences between the topological pooling methods, we randomly selected one graph from the Proteins dataset and show in Fig.~\ref{fig:coarsening_enzymes} the coarsened graphs computed by GRACLUS, NMF, and NDP.
From Fig.~\ref{fig:coarsening_enzymes}(b) we notice that the graphs $\A^{(1)}$ and $\A^{(2)}$ in GRACLUS have additional nodes that are disconnected.
As discussed in Sect.~\ref{sec:related_work}, these are the fake nodes that are added to the graph so that its size can be halved at every pooling operation.
Fig.~\ref{fig:coarsening_enzymes}(c) shows that NMF produces graphs that are very dense, as a consequence of the multiplication with the dense soft-assignment matrix to construct the coarsened graph.
Finally, Fig.~\ref{fig:coarsening_enzymes}(d) shows that NDP produces coarsened graphs that are sparse and preserve well the topology of the original graph.

\subsection{Graph signal classification}
In this task, different graph signals $\mathbf{X}_i$, defined on the same adjacency matrix $\mathbf{A}$, must be classified with a label $\mathbf{y}_i$.
We use the same architecture adopted for graph classification, with the only difference that each pooling operation is now implemented with stride 4: MP(32)-P(4)-MP(32)-P(4)-MP(32)-AvgPool-Softmax.
We recall that when using NDP a stride of 4 is obtained by applying two decimation matrices in cascade, $\S^{(1)}\S^{(0)}$ and $\S^{(3)}\S^{(2)}$ (\textit{cf.} Sec.~\ref{sec:pooling_matrices}).
We perform two graph signal classification experiments: image classification on MNIST and sentiment analysis on IMDB dataset.
\newline

\textbf{MNIST.} 
For this experiment, we adopt the same settings described in \cite{defferrard2016convolutional}.
To emulate a typical convolutional network operating on a regular 2D grid, an 8-NN graph is defined on the $28 \times 28$ pixels of the MNIST images, using as edge weights the following similarity score between nodes:
\begin{equation}
\label{eq:edges}
    a_{ij} = \mathrm{exp}\left(- \frac{\| p_i - p_j \|^2}{\sigma^2}\right),
\end{equation}
where $p_i$ and $p_j$ are the 2D coordinates of pixel $i$ and $j$.
The graph signal $\X_i \in \mathbb{R}^{784 \times 1}$ is the $i$-th vectorized image.

\begin{table}
\setlength\tabcolsep{.7em} 
\small
\bgroup
\def\arraystretch{1.25} 
\centering
\caption{Graph signal classification accuracy on MNIST.}
\begin{tabular}{ccccc}
\cmidrule[1.5pt]{1-5}
\textbf{DiffPool} & \textbf{Top-$K$} & \textbf{GRACLUS} & \textbf{NMF} & \textbf{NDP} \\
\cmidrule[.5pt]{1-5}
24.00 {\tiny$\pm$ 0.0} & 11.00 {\tiny$\pm$ 0.0} & 96.21 {\tiny$\pm$ 0.18} & 94.15{\tiny$\pm$ 0.17} & \textbf{97.09 {\tiny$\pm$ 0.01}} \\
\cmidrule[1.5pt]{1-5}
\end{tabular}
\label{tab:mnist_res}
\egroup
\end{table}

Tab.~\ref{tab:mnist_res} reports the average results achieved over 10 independent runs by a GNN implemented with different pooling operators.
Contrarily to graph classification, DiffPool and Top$K$ fail to solve this task and achieve an accuracy comparable to random guessing. 
On the contrary, the topological pooling methods obtain an accuracy close to a classical CNN, with NDP significantly outperforming the other two techniques. 

We argue that the poor performance of the two feature-based pooling methods is attributable to 1) the low information content in the node features, and 2) a graph that has a regular structure and is connected only locally. 
This means that the graph has a very large diameter (maximum shortest path), where information propagates slowly through MP layers.
Therefore, even after MP, nodes in very different parts of the graph will end up having similar (if not identical) features, which leads feature-based pooling methods to assign them to the same cluster.
As a result the graph collapses, becoming densely connected and losing its original structure.
On the other hand, topological pooling methods can preserve the graph structure by operating on the whole adjacency matrix at once to compute the coarsened graphs and are not affected by uninformative node features.

\begin{table*}
\centering
\setlength\tabcolsep{1em} 
\small
\bgroup
\def\arraystretch{1.25} 
\caption{Graph signal classification accuracy on IMDB sentiment analysis dataset.}
\begin{tabular}{lcccccccc}
\cmidrule[1.5pt]{1-9}
\textbf{\# Words} & \textbf{Dense} & \textbf{LSTM} & \textbf{TCN} & \textbf{DiffPool} & \textbf{Top-$K$} & \textbf{GRACLUS} & \textbf{NMF} & \textbf{NDP} \\
\cmidrule[.5pt]{1-9}
1k  & 82.65{\tiny$\pm$0.01} & 86.58{\tiny$\pm$0.03} & 85.61{\tiny$\pm$0.14} & 50.00{\tiny$\pm$0.0} & 50.00{\tiny$\pm$0.0} & 85.03{\tiny$\pm$0.10} & 82.51{\tiny$\pm$0.11} & \textbf{85.77{\tiny$\pm$0.03}} \\ 
5k  & 86.26{\tiny$\pm$0.03} & 86.59{\tiny$\pm$0.06} & 87.42{\tiny$\pm$0.09} & 50.00{\tiny$\pm$0.0} & 50.00{\tiny$\pm$0.0} & 87.55{\tiny$\pm$0.15} & 85.66{\tiny$\pm$0.11} & \textbf{87.79{\tiny$\pm$0.02}}\\ 
10k & 83.75{\tiny$\pm$0.02} & 85.98{\tiny$\pm$0.04} & 87.38{\tiny$\pm$0.07} & 50.00{\tiny$\pm$0.0} & 50.00{\tiny$\pm$0.0} & 87.29{\tiny$\pm$0.07} & OOM & \textbf{87.82{\tiny$\pm$0.02}} \\ 
\cmidrule[1.5pt]{1-9}
\end{tabular}
\label{tab:imdb_res}
\egroup
\end{table*}

\textbf{IMDB.} 
We consider the IMDB sentiment analysis dataset of movies reviews, which must be classified as positive or negative.
We use a graph that encodes the similarity of all words in the vocabulary. 
Each graph signal represents a review and consists of a binary vector with size equal to the vocabulary, which assumes value 1 in correspondence of a word that appears at least once in the review, and 0 otherwise.

The graph is built as follows.
First, we extract a vocabulary from the most common words in the reviews.
For each review, we consider at most 256 words, padding with a special token the reviews that are shorter and truncating those that are longer.
Then, we train a simple classifier consisting of a word embedding layer~\cite{mikolov2013distributed} of size 200, followed by a dense layer with a ReLU activation, a dropout layer~\cite{srivastava2014dropout} with probability 0.5, and a dense layer with sigmoid activation. 
After training, we extract the embedding vector of each word in the vocabulary and construct a $4$-NN graph, according to the Euclidean similarity between the embedding vectors.

As baselines, we consider the network used to generate the word embeddings (\textit{Dense}) and two more advanced architectures.
The first (\textit{LSTM}), is a network where the dense hidden layer is replaced by an LSTM layer~\cite{hochreiter1997long}, which allows capturing the temporal dependencies in the sequence of words in the review.
The other baseline (\textit{TCN}) is a network where the hidden layers are 1D convolutions with different dilation rates~\cite{oord2016wavenet}.
In particular, we used a Temporal Convolution Network~\cite{bai2018empirical} with 7 residual blocks with dilations $[1, 2, 4, 8, 16, 32, 64]$, kernel size 6, causal padding, and dropout probability 0.3.
The results averaged over 10 runs for vocabularies of different sizes (\# Words) are reported in Tab.~\ref{tab:imdb_res}.

Similarly to the MNIST experiment, we notice that neither DiffPool nor Top$K$ are able to solve this graph signal classification task.
The reason can be once again attributed to the low information content of the individual node features and in the sparsity of the graph signal (most node features are 0), which makes it difficult for the feature-based pooling methods to infer global properties of the graph by looking at local sub-structures. 

On the other hand, NDP consistently outperforms the baselines, GRACLUS, and NMF. 
The coarsened graphs generated by NMF when the vocabulary has 10k words are too dense to fit in the memory of the GPU (Nvidia GeForce RTX 2080).
Interestingly, the GNNs configured with GRACLUS and NDP always achieve better results than the \textit{Dense} network, even if the latter generates the word embeddings used to build the graph on which the GNN operates. This can be explained by the fact that the \textit{Dense} network immediately overfits the dataset, whereas the graph structure provides a strong regularization, as the GNN combines only words that are neighboring on the vocabulary graph.  

The \textit{LSTM} baseline generally achieves a better accuracy than \textit{Dense}, since it captures the sequential ordering of the words in the reviews, which also helps to prevent overfitting on training data.
Finally, the \textit{TCN} baseline always outperforms \textit{LSTM}, both in terms of accuracy and computational costs. 
This substantiates recent findings showing that convolutional architectures may be more suitable than recurrent ones for tasks involving sequential data~\cite{bai2018empirical}.

\section{Conclusions}
\label{sec:conclusion}

We proposed Node Decimation Pooling (NDP), a pooling strategy for Graph Neural Networks that reduces a graph based on properties of its Laplacian. 
NDP partitions the nodes into two disjoint sets by optimizing a \maxcut{} objective. 
The nodes of one set are dropped, while the others are connected with Kron reduction to form a new smaller graph. 
Since Kron reduction yields dense graphs, a sparsification procedure is used to remove the weaker connections.

The algorithm we proposed to approximate the \maxcut{} solution is theoretically grounded on graph spectral theory and achieves good results while being, at the same time, simple and efficient to implement.
To evaluate the \maxcut{} solution, we considered theoretical bounds and we introduced an experimental framework to empirically assess the quality of the solution.

We demonstrated that the graph sparsification procedure proposed in this work preserves the spectrum of the graph up to an arbitrarily small constant.
In particular, we first derived an analytical relationship between the eigenvalues of the adjacency matrix of the original and sparsified graphs.
Then, we performed numerical experiments to study how much the spectrum of the graph Laplacian varies, in practice, after sparsification.

We compared NDP with two main families of pooling methods for GNNs: topological (to which NDP belongs) and feature-based methods.
NDP has advantages compared to both types of pooling. 
In particular, experimental results showed that NDP is computationally cheaper (in terms of both time and memory) than feature-based methods, while it achieves competitive performance on all the downstream tasks taken into account. 
An important finding in our results indicates that topological methods are the only viable approach in graph signal classification tasks.

\subsection*{Acknowledgments}
We are grateful to the anonymous reviewers for critically reading our manuscript and for giving us important suggestions, which allowed us to significantly improve our work.

\appendix

\subsection{Kron reduction in graph with self-loops}
\label{sec:kron_loops}
If $\A$ contains self loops, the existence of the strict inequality condition $\L_{ii} > \sum_{j=1, j \neq i}^n |\L_{ij}|$ discussed in Sec.~\ref{sec:links_construction} is no more guaranteed.
However, it is sufficient to consider the loopy-Laplacian $\Q = \mathbf{D} - \A + 2 \text{diag}(\A)$, where $\text{diag}(\A)$ is the diagonal of $\A$, defined as $\{\A_{ii}\}_{i=1}^N$.
$\Q$ is now an irreducible matrix and $\Q_{ii} > \sum_{j=1, j \neq i}^n |\Q_{ij}| + \A_{ii}$ holds for at least least one vertex $i \in \mathcal{V}^{+}$.
We notice that the adjacency matrix can be univocally recovered: $\A = -\Q + \text{diag}(\{ \sum_{j=1, j\neq i}^N \Q_{ij}\}_{i=1}^N)$.
Therefore, from the Kron reduction $\Q^{(1)}$ of $\Q$ we can first recover $\A^{(1)}$ and then compute the reduced Laplacian as $\L^{(1)} = \D^{(1)} - \A^{(1)}$.

\subsection{Derivation of the \maxcut{} upperbound}
\label{sec:upperbound_derivation}
Let us consider the Rayleigh quotient
\begin{equation}
    \label{eq:rayleigh}
    r(\z, \L) = \frac{\z^T \L \z}{\z^T\z},
\end{equation}
which assumes its maximum value $\lambda_\text{max}$ when $\z$ is the largest eigenvector of the Laplacian $\L$.
When $\z$ is the partition vector in \eqref{eq:partition_vec}, we have $r(\z, \L) \leq \lambda_\text{max}$.
As shown in Sect.~\ref{sec:maxcut}, the numerator in \eqref{eq:rayleigh} can be rewritten as 
$\z^T \L \z = \sum_{i,j \in \mathcal{E}} \frac{a_{ij}(z_i - z_j)^2}{2} = 
\sum_{i,j \in \mathcal{V}^{+}} \frac{a_{ij}(z_i - z_j)^2}{2} + \sum_{i,j \in \mathcal{V}^{-}} \frac{a_{ij}(z_i - z_j)^2}{2} + \sum_{i \in \mathcal{V}^{+}, j \in \mathcal{V}^{-}} \frac{a_{ij}(z_i - z_j)^2}{2} = 
0 + 0 + \sum_{i \in \mathcal{V}^{+}, j \in \mathcal{V}^{-}} \frac{a_{ij} 2^2}{2} = 
2\cdot\text{cut}(\z)$, since $z_i = 1$ if $i \in \mathcal{V}^{+}$ and $z_i = -1$ if $i \in \mathcal{V}^{-}$ according to \eqref{eq:partition_vec}, and where $\text{cut}(\z)$ is the volume of edges crossing the partition induced by $\z$.
From \eqref{eq:partition_vec} also follows that the denominator in \eqref{eq:rayleigh} is $\z^T\z = N$, since $z_i^2 = 1, \forall i$.
By combining the results we obtain
\begin{equation}
    \label{eq:N_bound}
    \frac{2\cdot\text{cut}(\z)}{N} \leq \lambda_\text{max}, \forall \z \in \mathbb{R}^N \rightarrow \maxcut{} \leq  \lambda_\text{max} \frac{N}{2}.
\end{equation}

When considering the symmetric Laplacian $\L_s$, we multiply \eqref{eq:rayleigh} on both sides by $\D^{-1/2}$, changing the denominator into $\z^T\D\z = \sum_{i,i}d_{ii} z_i^2 = |\mathcal{E}|$.
Replacing in \eqref{eq:N_bound} $N$ with $|\mathcal{E}|$ and $\lambda_\text{max}$ with $\lambda^s_\text{max}$, we get the bound $\maxcut{} / |\mathcal{E}| \leq \lambda^s_\text{max} / 2$.

\subsection{Relationship with Trevisan \cite{trevisan2012max} spectral algorithm}
\label{sec:trevisan}
The main result in \cite{trevisan2012max} states that if $\lambda^s_\text{max} \geq 2(1-\tau)$, then there exist a set of vertices $\mathcal{V}$ and a partition $(\mathcal{V}^1, \mathcal{V}^2)$ of $\mathcal{V}$ so that $|e(\mathcal{V}^1, \mathcal{V}^2)| \geq \frac{1}{2} (1 \sqrt{16 \tau})\text{vol}(\mathcal{V})$, where $\text{vol}(\mathcal{V}) = \sum_{i \in \mathcal{V}} d_i$ and $e(\mathcal{V}^1, \mathcal{V}^2)$ are the edges with one endpoint in $\mathcal{V}^1$ and the other in $\mathcal{V}^2$. 
In cases where an optimal solution cuts $1-\tau$ fraction of the edges, a partition found by a recursive spectral algorithm will remove $1 - 4\sqrt{\tau} +8\tau$ of the edges. 
The optimal $\tau$ is value 0.0549 for which $\frac{1 - 4\sqrt{\tau} +8\tau}{1-\tau}$ reaches its minimum 0.5311. 
When the largest eigenvalue $\lambda^s_\text{max}$ is too small, the expected random cut is larger than the solution found by the spectral algorithm.
The analysis in \cite{trevisan2012max} shows that the spectral cut is guaranteed to be larger than the random cut only when $\lambda^s_\text{max} \geq 2(1-\tau)$, \textit{i.e.}, when $\lambda^s_\text{max} \geq 1.891$ given the optimal value $\tau = 0.0549$.
Therefore, an algorithm that recursively cuts a fraction of edges according to the values in $\v^s_\text{max}$ until $\lambda^s_\text{max} \geq 2(1-\tau)$ and then performs a random cut, finds a solution that is always $\geq0.5311$ \maxcut{}. 

\bibliographystyle{IEEEtran}
\bibliography{references}

\clearpage

\onecolumn

\section*{Supplementary material}
\setcounter{subsection}{0}

\subsection{Cut size on regular and random graphs}

In the following, we consider two different types of bipartite graphs, a regular grid and a ring graph, and four classes of random graphs, which are the Stochastic Block Model (SBM), a sensor network, the Erdos-Renyi graph, and a community graph. 
A graphical representation in the node space and the adjacency matrix of one instance of each graph type is depicted in Fig.~\ref{fig:graph_types}.

\begin{figure*}[!ht]
    \centering
    \includegraphics[keepaspectratio, width=.75\textwidth]{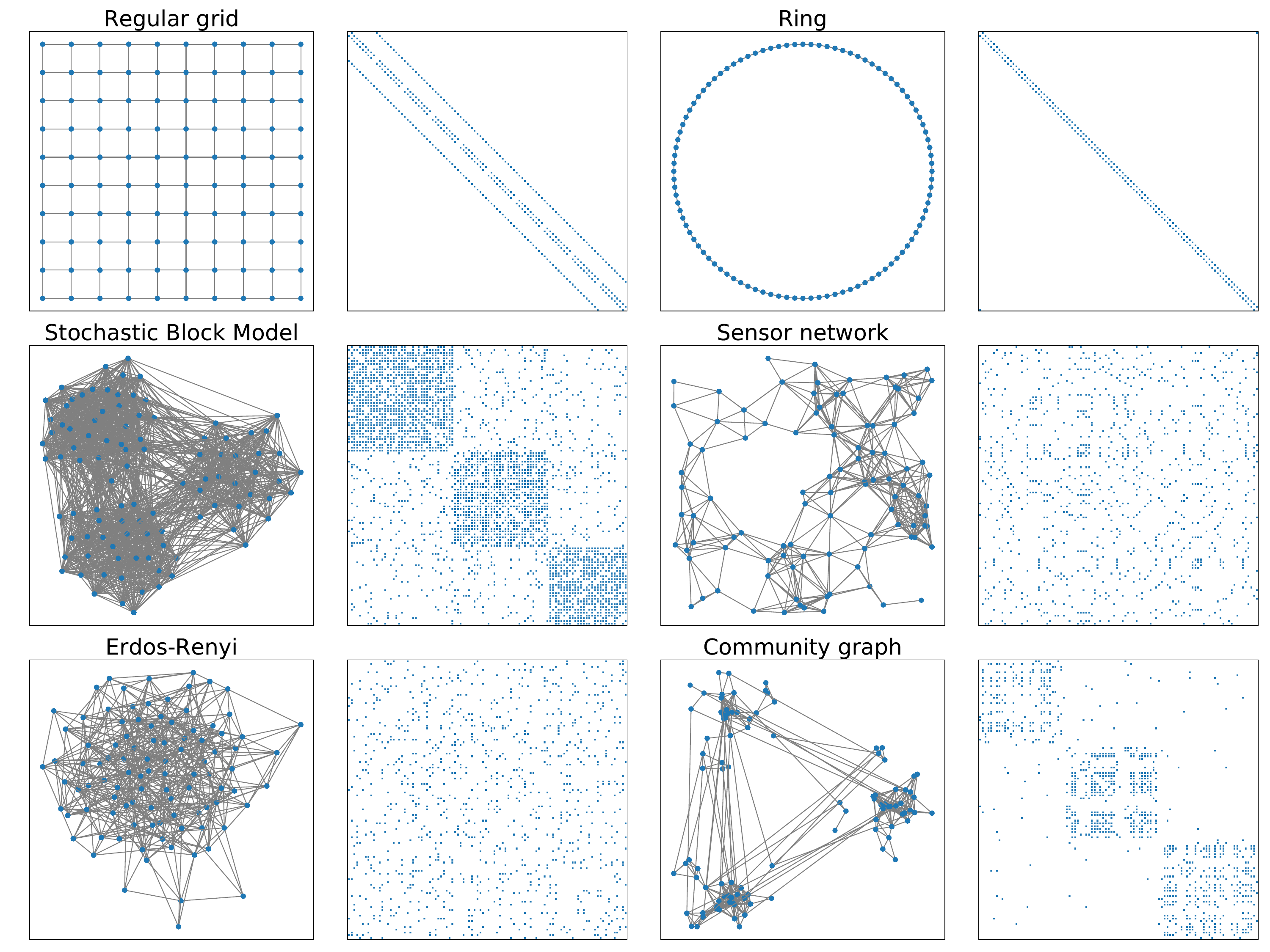} 
    \caption{Graphical representation and adjacency matrix of 2 regular graphs (\textit{Regular grid} and \textit{Ring}) and an instance of 4 random graphs (\textit{Stochastic Block Model (SBM)}, \textit{Sensor network}, \textit{Erdos-Renyi graph}, and \textit{Community graph}).}
    \label{fig:graph_types}
\end{figure*}

In Tab.~\ref{tab:cut_size} we report the size of the cut $\gamma(\mathbf{z})$ induced by the partition $\mathbf{z}$, which is obtained with the proposed spectral algorithm.
We consider both the partitions obtained from the eigenvectors $\v_\text{max}$ and $\v^s_\text{max}$, associated with the largest eigenvalue of the Laplacian $\mathbf{L}$ and the symmetric Laplacian $\mathbf{L}_s$, respectively.
The values in Tab.~\ref{tab:cut_size} are the mean and standard deviation of $\gamma(\mathbf{z})$ obtained on 50 different instances of each class.
We also report the \maxcut{} upperbound, $\lambda_\text{max}^\text{s}/2$ and the size of the cut induced by a random partition. 
\begin{table*}[!ht]
    \footnotesize
    \centering
    \caption{Size of the cut obtained with our spectral algorithm on different types of graph. Reported is the mean and standard deviation of the cut obtained from $\v_\text{max}$ and $\v^s_\text{max}$ on 50 instances of each graph type and the \maxcut{} upperbound, $\lambda_\text{max}^\text{s}/2$. For completeness, we show also the results obtained by the random cut.} 
    \label{tab:cut_size}
    \begin{tabular}{lcccccc}
    \cmidrule[1.5pt]{1-7}
                                & \textbf{Grid} & \textbf{Ring} & \textbf{SBM}      & \textbf{Sensor}   & \textbf{Erdos-Renyi}  & \textbf{Community} \\
    \cmidrule[.5pt]{1-7}
    \maxcut{} upperbound        & 1.0           & 1.0           & 0.63$\pm$0.0      & 0.77$\pm$0.05     & 0.67$\pm$0.0          & 0.89$\pm$0.06 \\
    Cut with $\v_\text{max}$    & 1.0           & 1.0           & 0.51$\pm$0.03     & 0.53$\pm$0.03     & 0.55$\pm$0.02         & 0.5$\pm$0.05 \\
    Cut with $\v^s_\text{max}$  & 1.0           & 1.0           & 0.58$\pm$0.01     & 0.58$\pm$0.02     & 0.61$\pm$0.0          & 0.54$\pm$0.04 \\
    Random cut                  & 0.5$\pm$0.03  & 0.5$\pm$0.05  & 0.5$\pm$0.01      & 0.51$\pm$0.02     & 0.5$\pm$0.0           & 0.5$\pm$0.01 \\
    \cmidrule[1.5pt]{1-7}
    \end{tabular}
\end{table*}
Consistently better performance are obtained when the partition is based on $\v^s_\text{max}$ rather than $\v_\text{max}$; as discussed in Sect. IV-A, this is because many entries in $\v_\text{max}$ have small values that cannot be partitioned precisely according to the sign, due to numerical errors.
The results show that on the two regular graphs, which are bipartite, the cut obtained with the spectral algorithm coincides with the \maxcut{} upper bound and, therefore, also with the optimal solution.  
For every other graph, the cut yielded by the spectral algorithm is always larger than the random cut. 
We recall that in those cases the \maxcut{} is unknown and the gaps between the lower bound (0.5) and the upper bound ($\lambda^s_\text{max}/2$) can be arbitrarily large.

\subsection{Spectral and random cut as a function of edge density}

\begin{figure}[!ht]
    \centering
    
    \subfigure[\footnotesize Regular grid]{
    \includegraphics[keepaspectratio,width=0.3\textwidth]{figs_r1/grid.pdf}
    }
    \subfigure[\footnotesize Ring graph]{
    \includegraphics[keepaspectratio,width=0.3\textwidth]{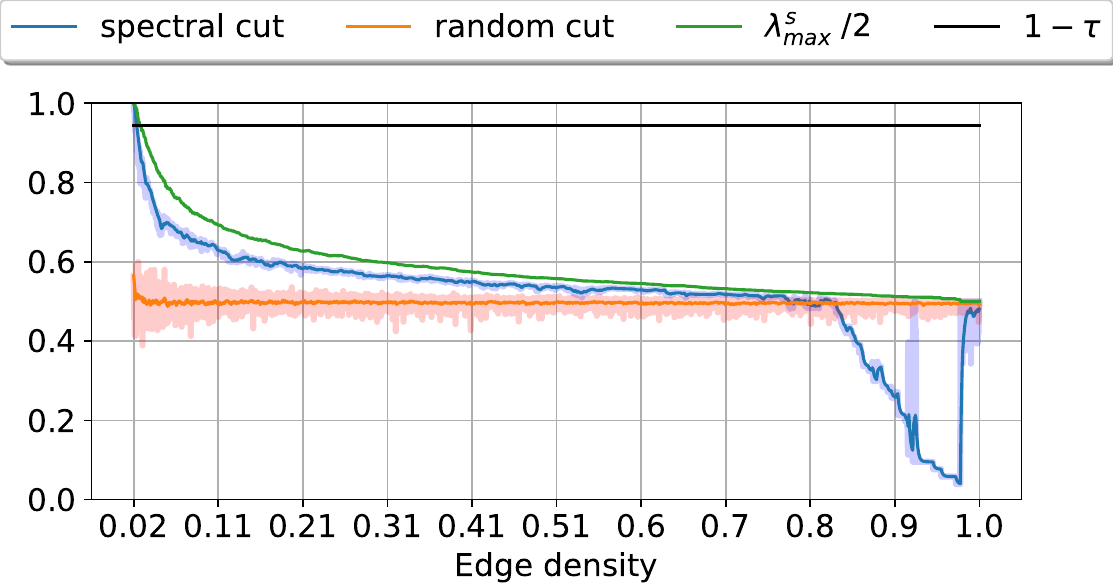}
    }
    \subfigure[\footnotesize Stochastic Block Model]{
    \includegraphics[keepaspectratio,width=0.3\textwidth]{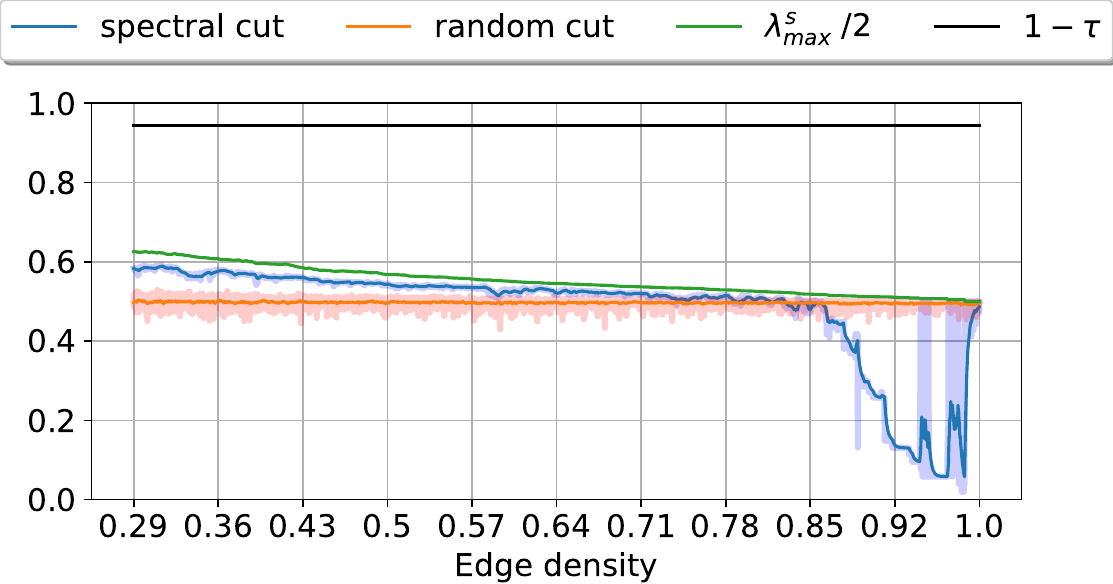}
    }

    \subfigure[\footnotesize Erdos-Renyi]{
    \includegraphics[keepaspectratio,width=0.3\textwidth]{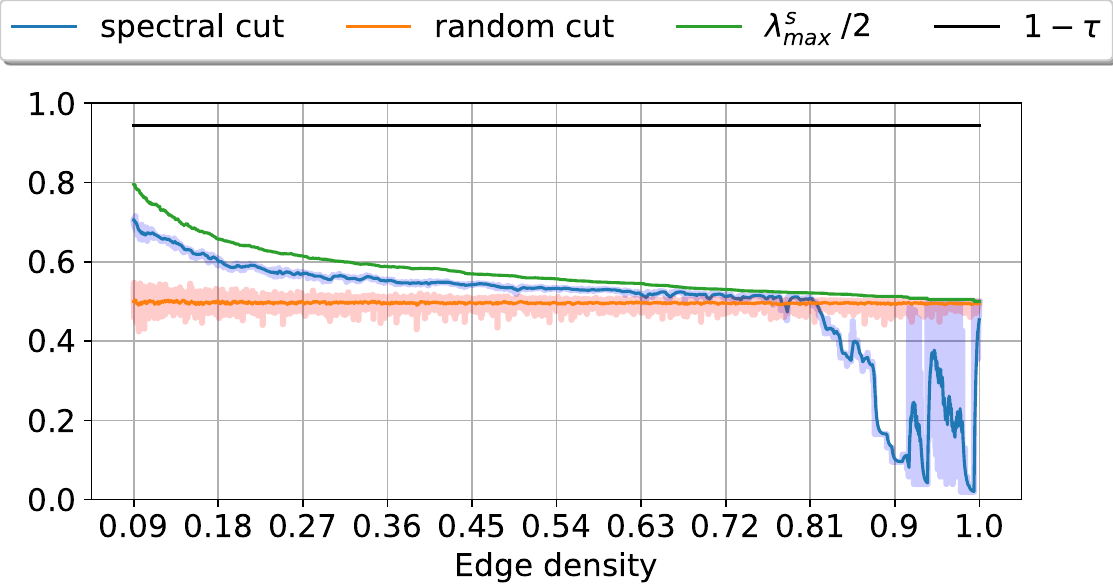}
    }
    \subfigure[\footnotesize Sensor network]{
    \includegraphics[keepaspectratio,width=0.3\textwidth]{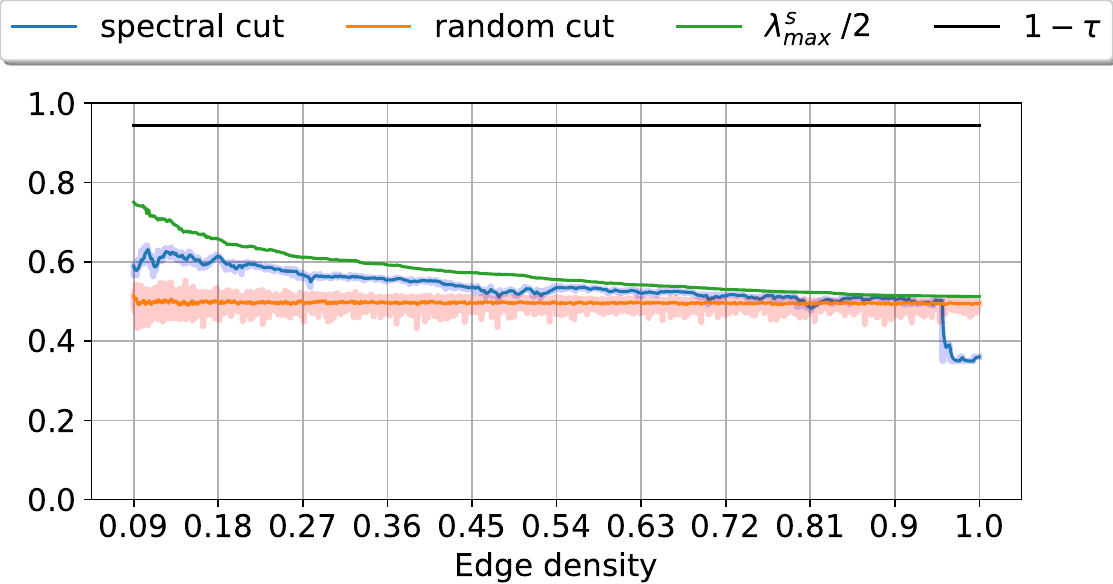}
    }
    \subfigure[\footnotesize Community graph]{
    \includegraphics[keepaspectratio,width=0.3\textwidth]{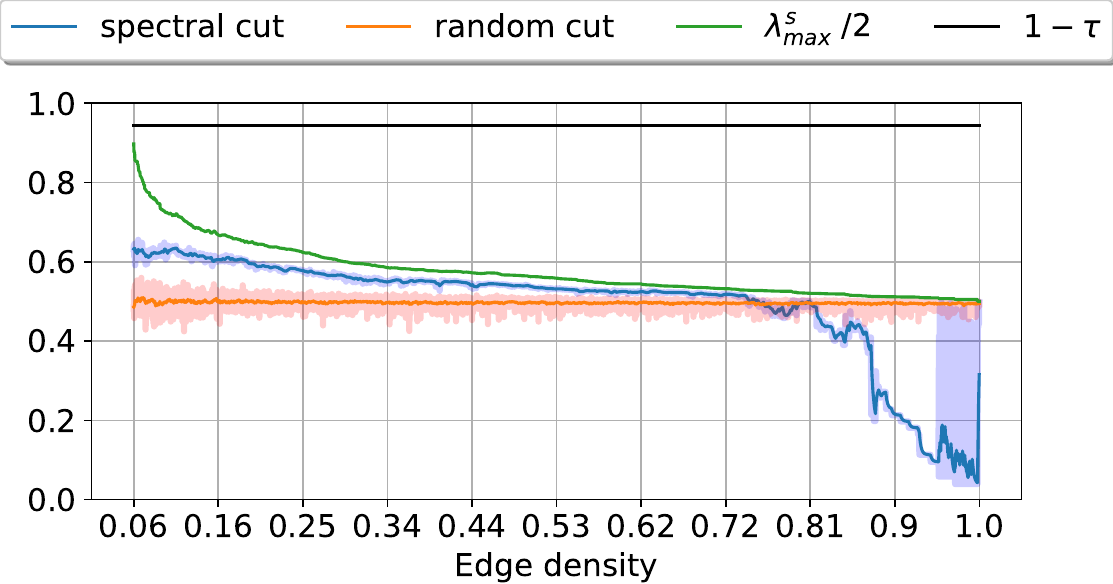}
    }
    \caption{Blue line: fraction of edges cut by the partition yielded by the spectral algorithm. 
    Orange line: fraction of edges removed by a random cut. 
    Green line: the \maxcut{} upper bound $\lambda^s_\text{max}/2$. 
    Black line: the threshold from \cite{trevisan2012max} indicating the value of $\lambda^s_\text{max}/2$ below which one should switch to the random cut to obtain a solution $\geq 0.53$ \maxcut{}. 
    The x-axis indicates the density of the graph connectivity, which increases by randomly adding edges.}
    \label{fig:adding_edges2}
\end{figure}

We replicate for each graph type the experiment in Sect. IV-B, which illustrates how the size of the cut obtained with the proposed algorithm changes as we randomly add edges.
Fig.~\ref{fig:adding_edges2} reports in blue the size of the cut associated with the partition yielded by the spectral algorithm; in orange the size of the cut yielded by the random partition; in green the \maxcut{} upper bound; in black the theoretical threshold that indicates when to switch to the random partition to obtain a cut with size $\geq 0.53$ \maxcut{}.

The examples encompass the two extreme cases where the \maxcut{} solution is known: a bipartite graph where \maxcut{} is 1 and the complete graph where \maxcut{} is 0.5.
In every example, when $\lambda^s_\text{max}$ becomes lower than $1-\tau$ the solution of the spectral algorithm is still larger than the cut induced by the random partition.
In fact, the spectral cut remains larger than the random cut until when the density is approximately 70-80\%.
Importantly, when the solution of the spectral algorithm become worse than the random cut, the \maxcut{} upper bound is close to 0.5.
Therefore, when the spectral cut is lower than 0.5 it is possible to return the random partition instead, which yields a nearly-optimal solution.

\subsection{Visual examples of coarsening with NDP pooling}

\begin{figure}[!pt]
    \centering
    \subfigure[\footnotesize Regular grid]{
    \includegraphics[keepaspectratio,width=0.47\textwidth]{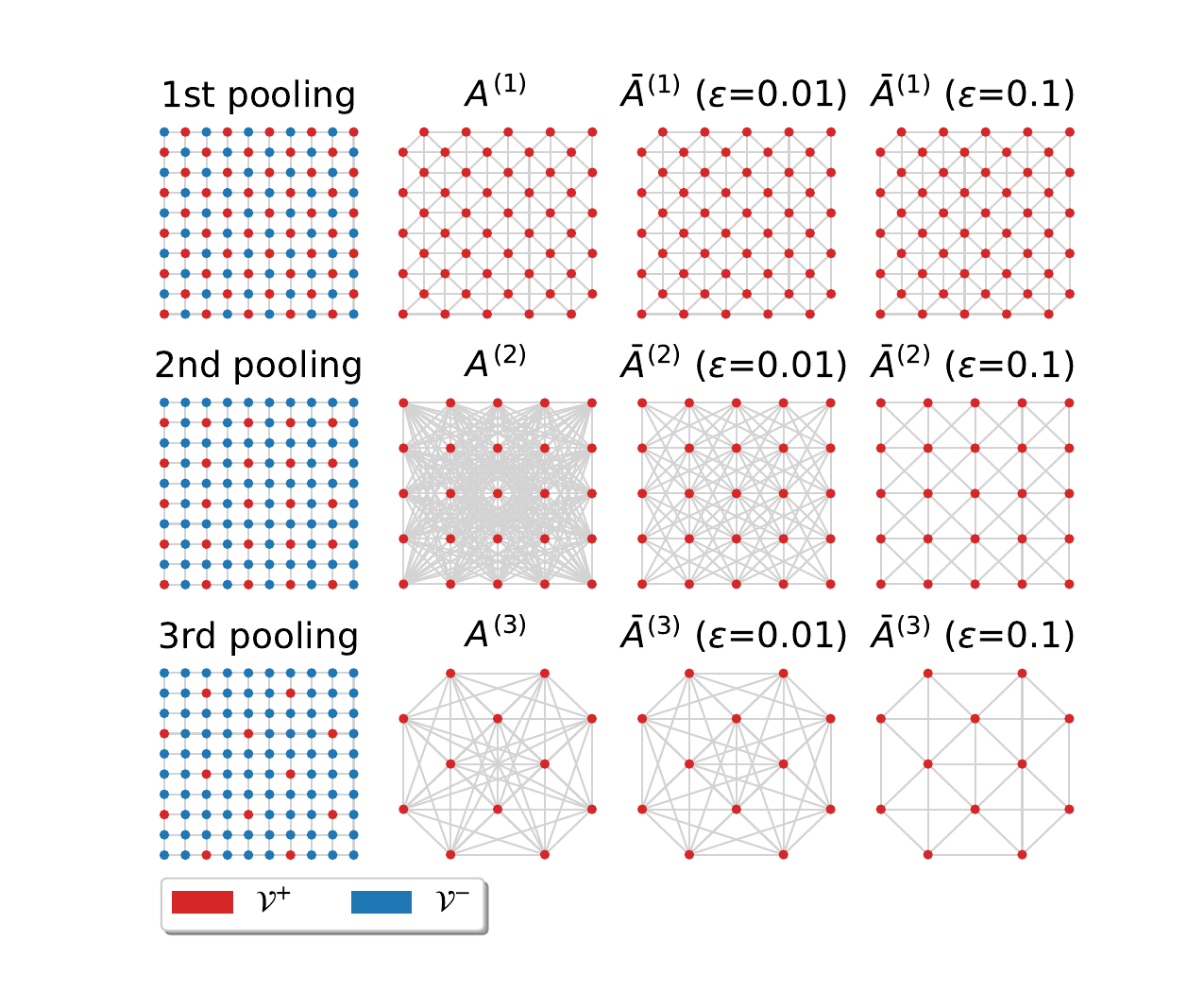}
    }
    \subfigure[\footnotesize Ring graph]{
    \includegraphics[keepaspectratio,width=0.47\textwidth]{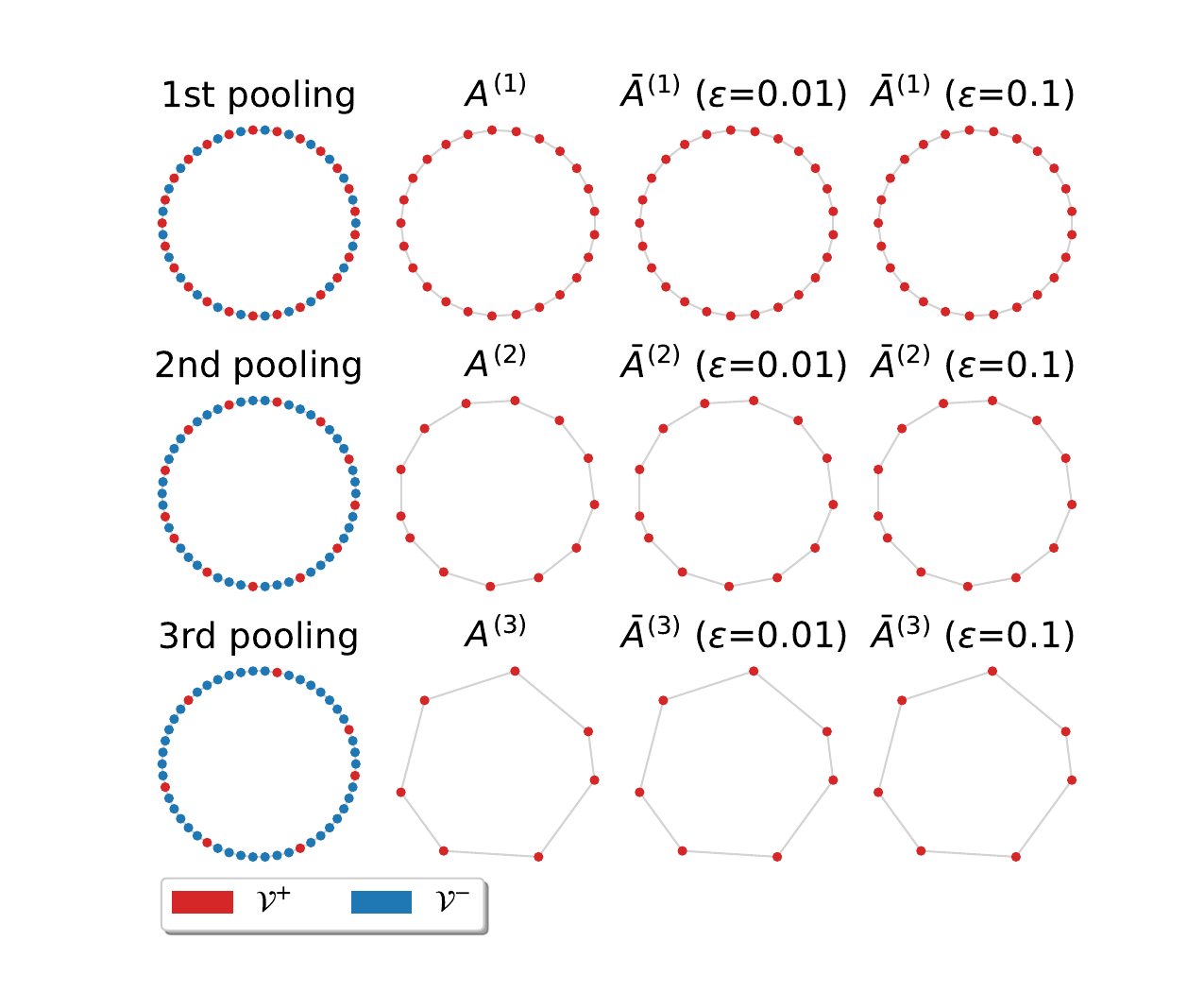}
    }
    
    \subfigure[\footnotesize Stochastic Block Model]{
    \includegraphics[keepaspectratio,width=0.47\textwidth]{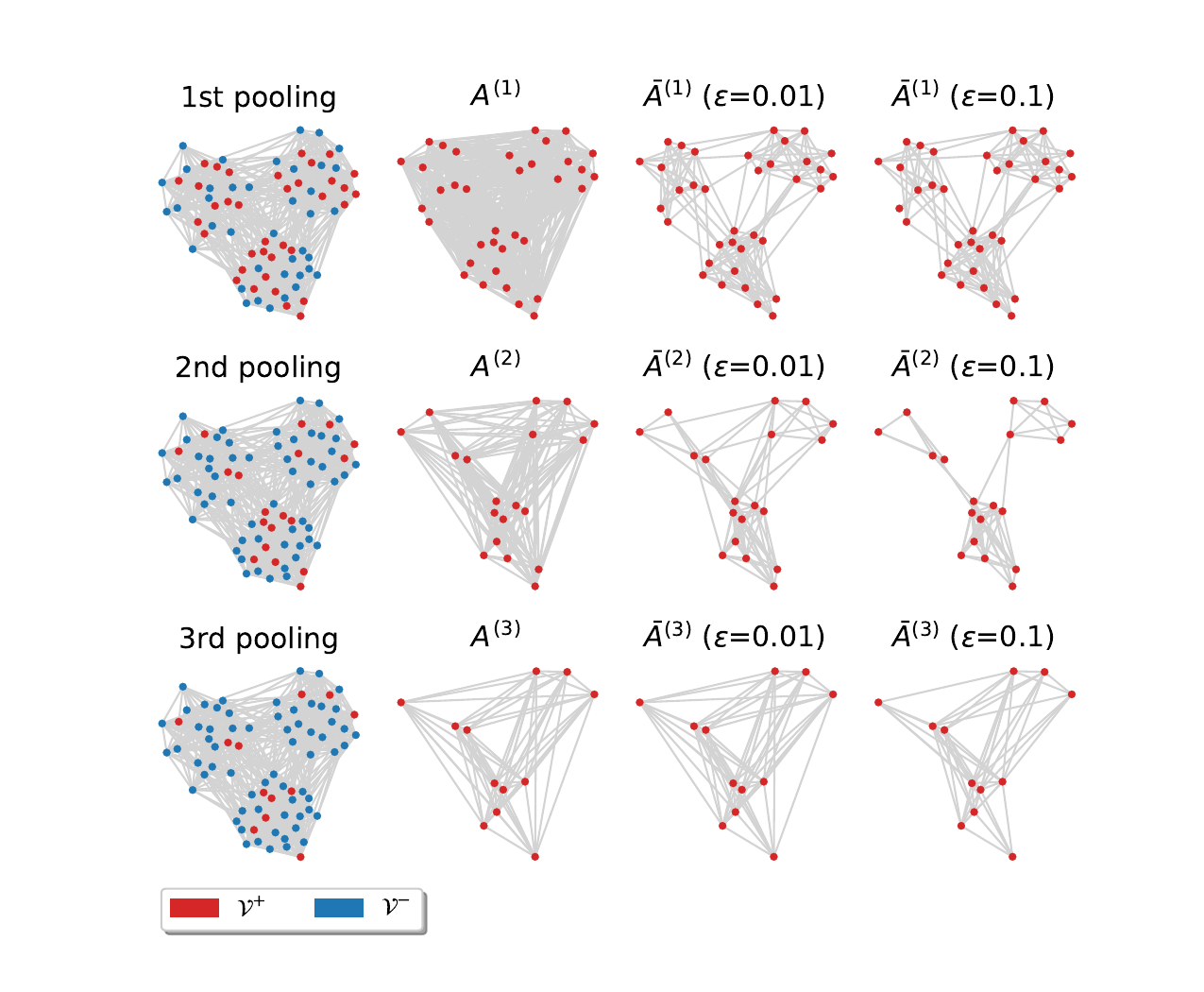}
    }
    \subfigure[\footnotesize Sensor network]{
    \includegraphics[keepaspectratio,width=0.47\textwidth]{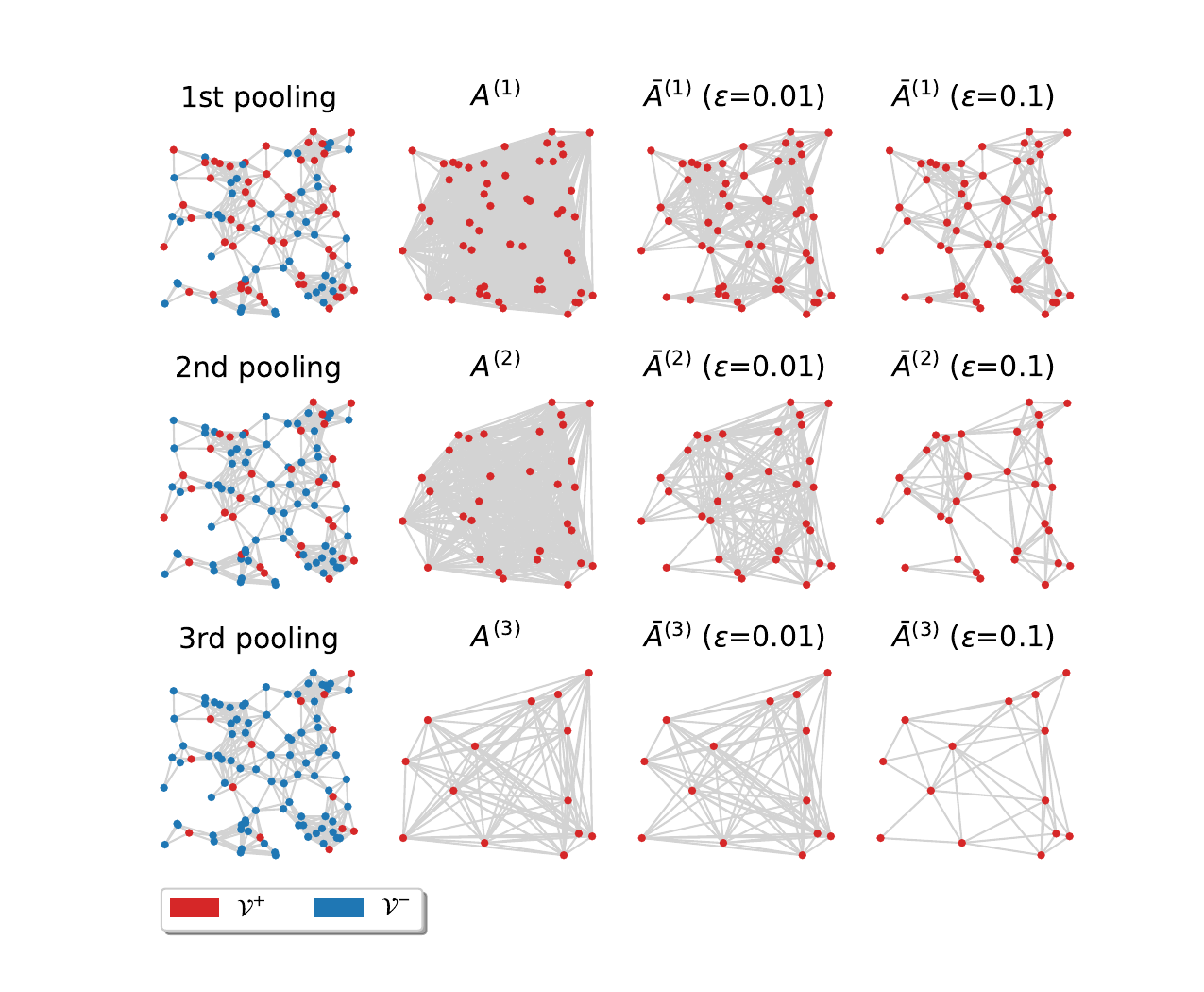}
    }
    
    \subfigure[\footnotesize Erdos-Renyi]{
    \includegraphics[keepaspectratio,width=0.47\textwidth]{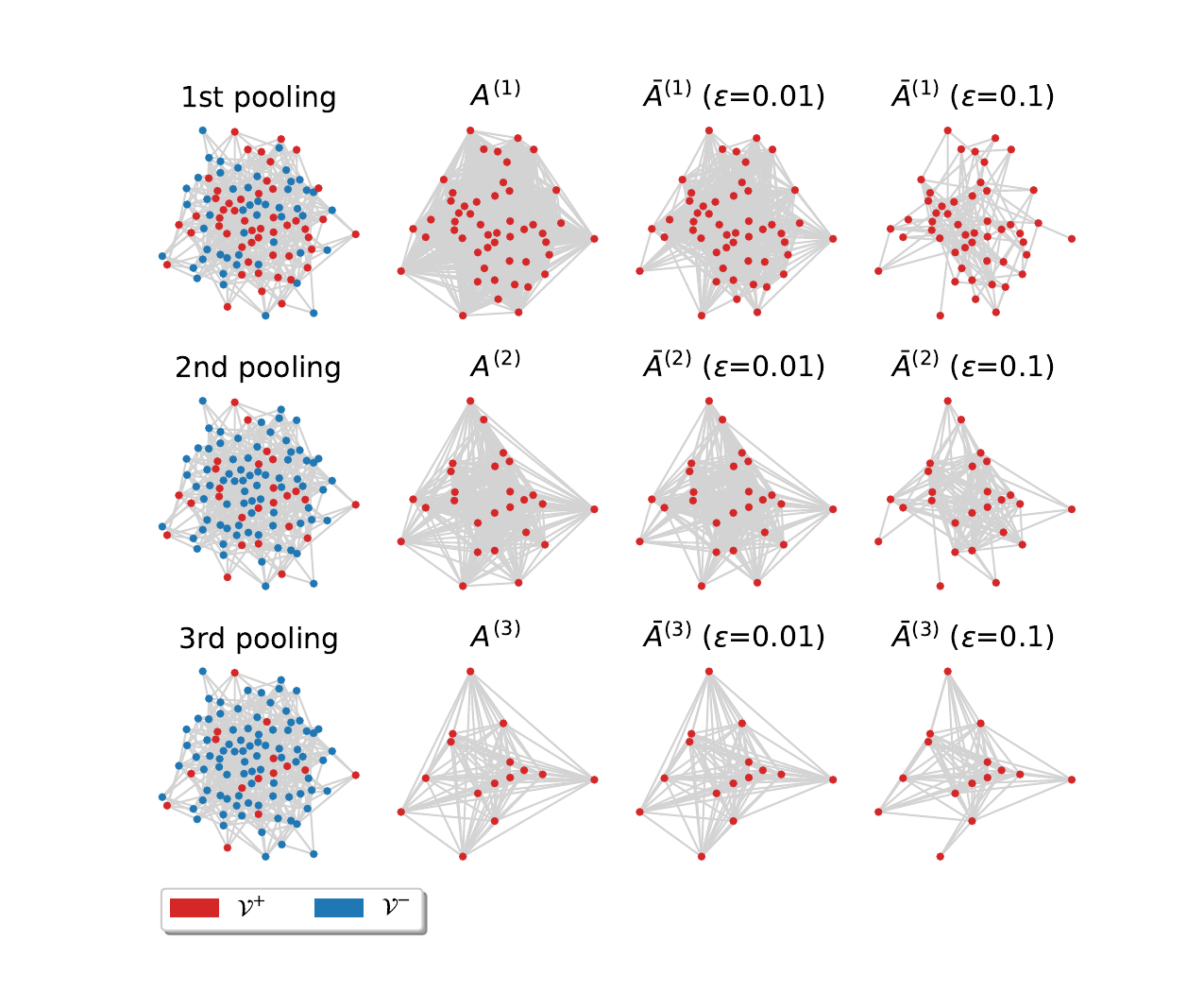}
    }
    \subfigure[\footnotesize Community graph]{
    \includegraphics[keepaspectratio,width=0.47\textwidth]{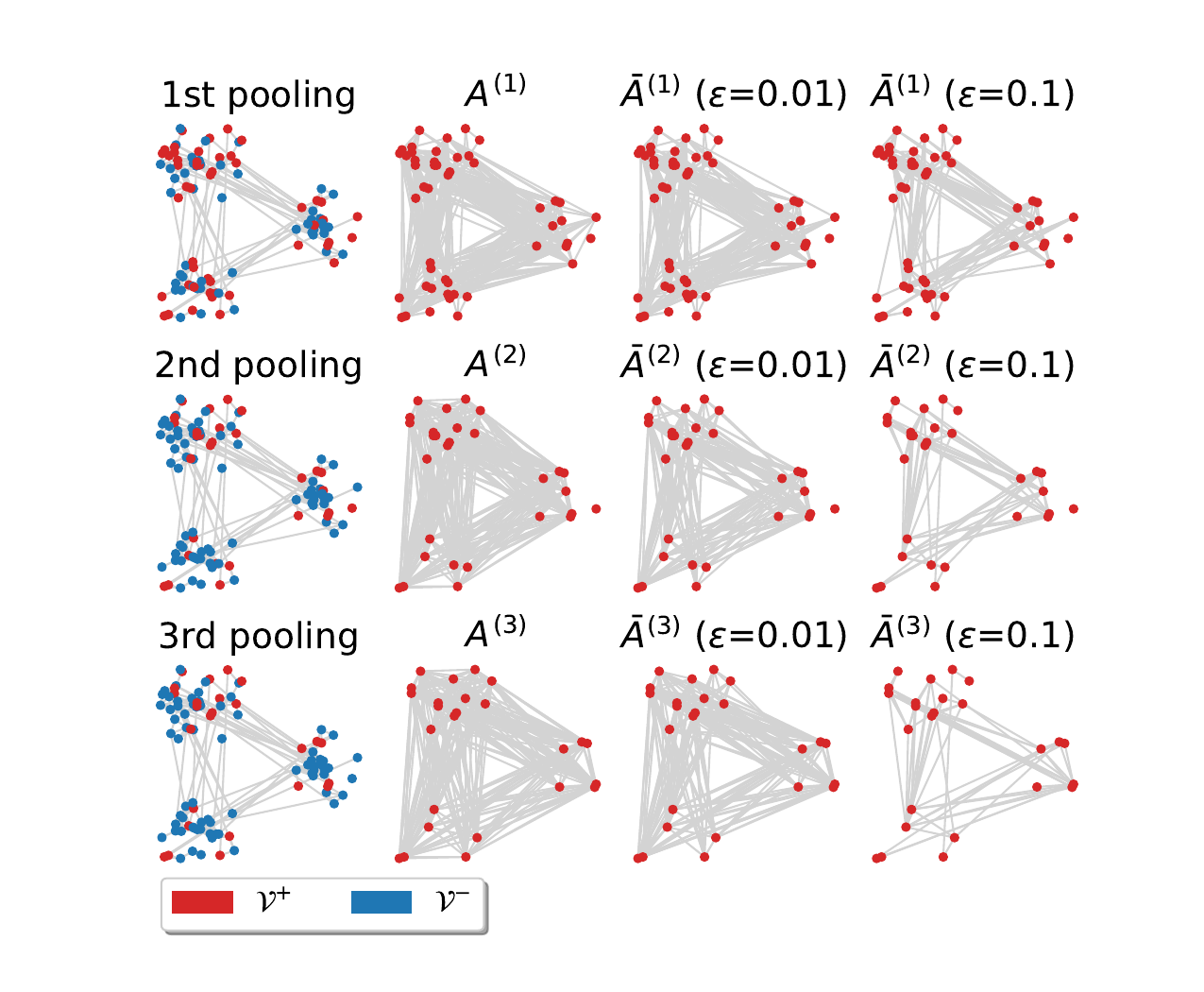}
    }
    
    \caption{Coarsened graphs obtained with the NDP algorithm. The 3rd and 4th column show graphs sparsified with different threshold $\epsilon$.}
    \label{fig:visualization_grid}
\end{figure}

Fig.~\ref{fig:visualization_grid} shows for the result of the NDP coarsening procedure on the 6 types of graphs.
The first column shows the subset of nodes of the original graph that are selected ($\mathcal{V}^{+}$, in red) and discarded ($\mathcal{V}^{-}$, in blue) after each pooling step.
The second column shows the coarsened graph obtained after each pooling operation.
Finally, columns 3 and 4 show the coarsened graphs after applying sparsification with different thresholds $\epsilon$.

\subsection{Spectral similarity in sparsified graphs}

In Sec. IV-E we introduced the spectral similarity distance to quantify how much the spectrum of the Laplacian associated with the sparsified adjacency matrix changes when edges smaller than $\epsilon$ are dropped.
In Fig.~\ref{fig:varying_eps2} we show how the graph structure (in terms of spectral similarity) varies, when the value of $\epsilon$ increases and more edges are dropped.
In every example, for small values of $\epsilon$ the structure of the graphs changes only slightly while a large amount of edges is dropped.
Notably, the spectral similarity increases almost linearly with $\epsilon$, while the edge density decreases exponentially.

\begin{figure}
    \centering
    \subfigure[\footnotesize Regular grid]{
    \includegraphics[keepaspectratio,width=0.3\textwidth]{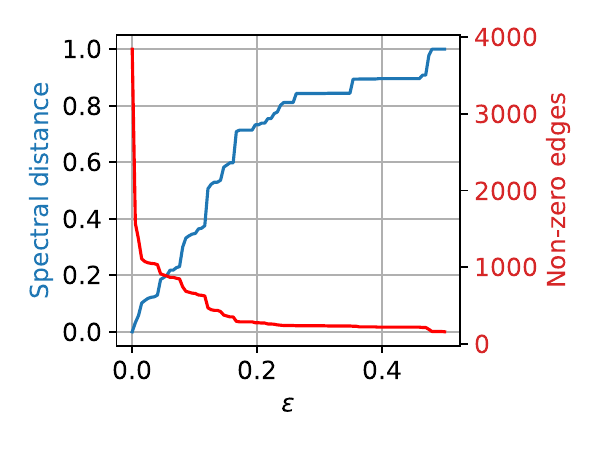}
    }
    ~
    \subfigure[\footnotesize Community graph]{
    \includegraphics[keepaspectratio,width=0.3\textwidth]{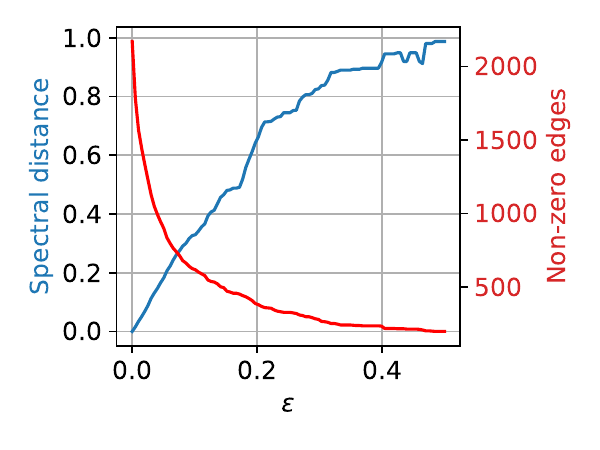}
    }
    ~
    \subfigure[\footnotesize Sensor network graph]{
    \includegraphics[keepaspectratio,width=0.3\textwidth]{figs_r1/sensor_eps.pdf}
    }
    \caption{In blue, the variation of spectral distance between the Laplacian $\L$ associated with $\A$ and the Laplacian $\bar{\L}$ associated with the adjacency matrix $\bar \A$ sparsified with a varying threshold $\epsilon$. In red, the number of edges that remain in $\bar{\L}$.}
    \label{fig:varying_eps2}
\end{figure}

\subsection{Mini-batch training}
Problems such as graph classification and graph regression are characterized by samples of graphs that, generally, have a variable number of vertices.
In order to apply MP and pooling operations when training a GNN on mini-batches, one solution is to perform zero-padding and obtain all graphs with $N_\text{max}$ vertices, where $N_\text{max}$ is the number of vertices in the largest graph of the dataset.
However, this solution is particularly inefficient in terms of memory cost, especially when there are many graphs with less than $N_\text{max}$ vertices.
A more efficient solution is to build the disjoint union of the graphs in each mini-batch and train the GNN on the combined Laplacian and graph signal.
This is the solution adopted in our experiments; Fig.~\ref{fig:graph_class} reports a visualization of the procedure.

\begin{figure*}[!ht]
    \centering
    \includegraphics[keepaspectratio, width=.75\textwidth]{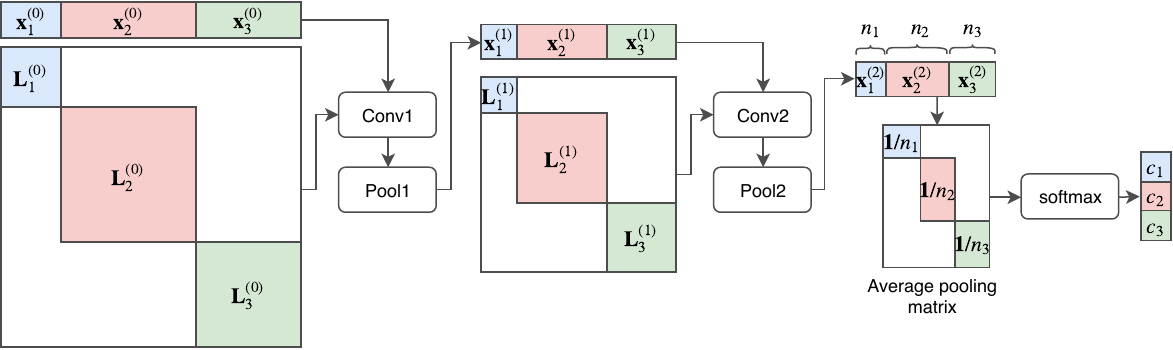}    
    \caption{Example of the implementation used in the graph classification task, where the GNN is fed with a disjoint union of the graphs in mini-batch. The illustration shows an example for a mini-batch of size three.}
    \label{fig:graph_class}
\end{figure*}

\subsection{Training curves}
Fig.~\ref{fig:learning} reports the evolution of the loss during training for 4 different graph classification datasets.
Notice that in our experiments we used early stopping. 
However, to provide a more extended profile of the training procedure, we show the training curves obtained when the GNN is trained for 1000 epochs on different datasets.

\begin{figure}[!ht]
    \centering
    \subfigure[Bench Hard]{\includegraphics[keepaspectratio,width=0.23\textwidth]{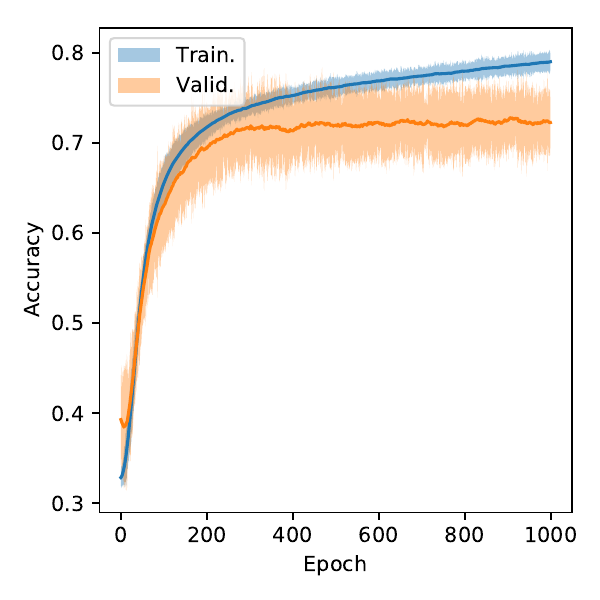}}
    ~
    \subfigure[NCI1]{\includegraphics[keepaspectratio,width=0.23\textwidth]{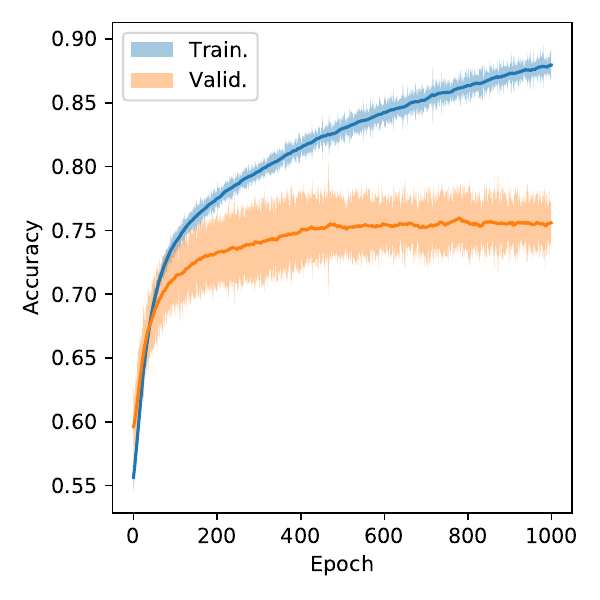}}
    ~
    \subfigure[MUTAG]{\includegraphics[keepaspectratio,width=0.23\textwidth]{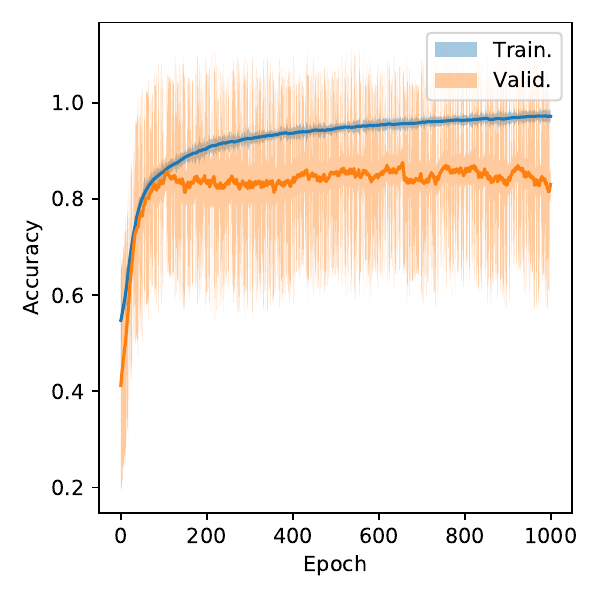}}
    ~
    \subfigure[Mutagenicity]{\includegraphics[keepaspectratio,width=0.23\textwidth]{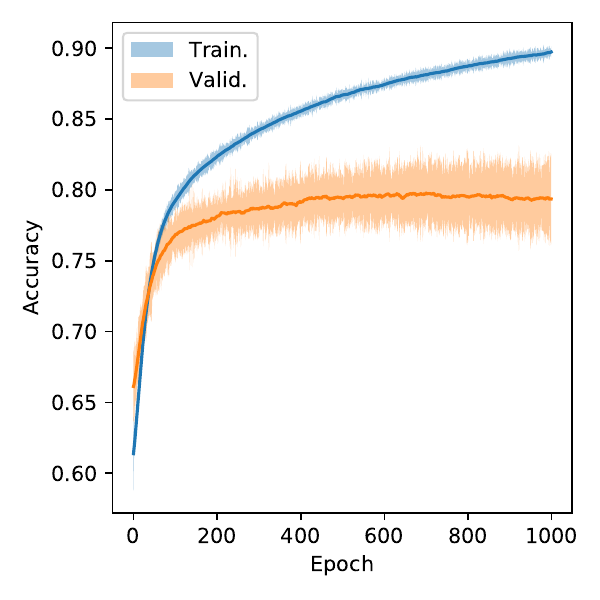}}
    
    \caption{Accuracy in training and validation over 1000 epochs on 4 different datasets. The curves are averaged over 10 runs per method and per dataset.}
    \label{fig:learning}
\end{figure}

\end{document}